\documentclass{article}

\usepackage{microtype}
\usepackage{graphicx}
\usepackage{subcaption}
\usepackage{booktabs} 
\usepackage{multirow}
\usepackage[inline]{enumitem}

\usepackage{hyperref}

\usepackage[preprint]{icml2026}

\usepackage{amsmath}
\usepackage{amssymb}
\usepackage{mathtools}
\usepackage{amsthm}

\usepackage[capitalize,noabbrev]{cleveref}

\theoremstyle{plain}
\newtheorem{theorem}{Theorem}[section]

\theoremstyle{definition}
\newtheorem{definition}[theorem]{Definition}

\theoremstyle{remark}

\newcommand{\std}[1]{_{\pm \text{\scriptsize #1}}}

\usepackage[textsize=tiny]{todonotes}

\icmltitlerunning{Geometric Mixture-of-Experts with Curvature-Guided Adaptive Routing for Graph Representation Learning}

\begin{document}

\twocolumn[
  \icmltitle{Geometric Mixture-of-Experts with Curvature-Guided Adaptive Routing for Graph Representation Learning}



  \icmlsetsymbol{equal}{*}

  \begin{icmlauthorlist}
    \icmlauthor{Haifang Cao}{sch}
    \icmlauthor{Yu Wang}{sch}
    \icmlauthor{Timing Li}{sch}
    \icmlauthor{Xinjie Yao}{sch}
    \icmlauthor{Pengfei Zhu}{sch}
  \end{icmlauthorlist}

  \icmlaffiliation{sch}{College of Intelligence and Computing, Tianjin University, Tianjin, China}

  \icmlcorrespondingauthor{Haifang Cao}{caohaifang@tju.edu.cn}
  \icmlcorrespondingauthor{Pengfei Zhu}{zhupengfei@tju.edu.cn}

  \icmlkeywords{Machine Learning, ICML}

  \vskip 0.3in
]



\printAffiliationsAndNotice{}  

\begin{abstract}

  Graph-structured data typically exhibits complex topological heterogeneity, making it difficult to model accurately within a single Riemannian manifold. While emerging mixed-curvature methods attempt to capture such diversity, they often rely on implicit, task-driven routing that lacks fundamental geometric grounding. To address this challenge, we propose a Geometric Mixture-of-Experts framework (GeoMoE) that adaptively fuses node representations across diverse Riemannian spaces to better accommodate multi-scale topological structures. At its core, GeoMoE leverages Ollivier-Ricci Curvature (ORC) as an intrinsic geometric prior to orchestrate the collaboration of specialized experts. Specifically, we design a graph-aware gating network that assigns node-specific fusion weights, regularized by a curvature-guided alignment loss to ensure interpretable and geometry-consistent routing. Additionally, we introduce a curvature-aware contrastive objective that promotes geometric discriminability by constructing positive and negative pairs according to curvature consistency. Extensive experiments on six benchmark datasets demonstrate that GeoMoE outperforms state-of-the-art baselines across diverse graph types.

\end{abstract}


\begin{figure}[t!]
  \centering
  \includegraphics[width=0.99\columnwidth]{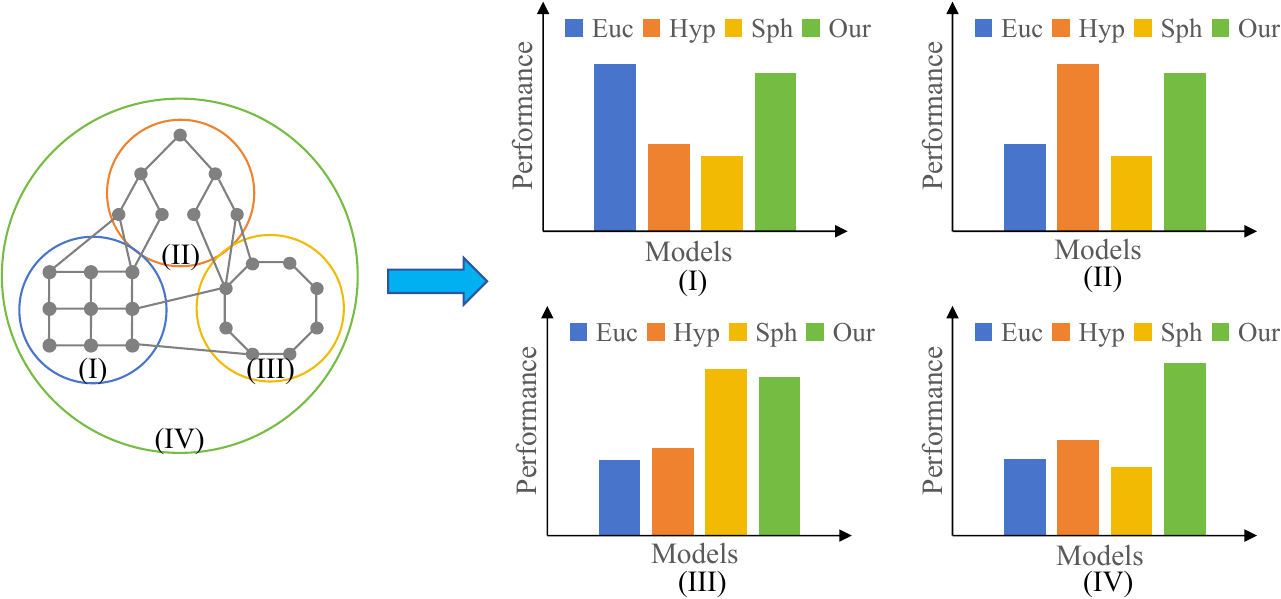}
  \caption{(Left) Topological visualization of real-world graphs (IV), combining grid (I), hierarchy (II), and cycle (III) structures. (Right) Performance of single-geometry models vs. our GeoMoE across these structures.}
  \label{fig: introduction}
\end{figure}


    

\section{Introduction}

Graph representation learning has become a fundamental tool for analyzing relational data in domains ranging from bioinformatics to social networks \cite{atz2021geometric, khoshraftar2024survey, chen2020graph}. A critical challenge lies in the profound geometric heterogeneity of real-world graphs. As visualized in Figure~\ref{fig: introduction}, local structures are rarely uniform, exhibiting a complex mixture of hierarchical expansions (hyperbolic), tight cycles (spherical), and grids (Euclidean) \cite{bronstein2017geometric, defferrard2020deepsphere}. This mismatch between the homogeneous geometric prior assumed by most models and the heterogeneous reality of data inevitably distorts vital structural information and creates a significant bottleneck for representation capacity.

However, most existing graph representation learning methods are built on a single geometric prior, predominantly Euclidean \cite{kipf2016semi, velivckovic2017graph}. Even recent extensions to non-Euclidean settings typically assume a globally shared geometry, such as a fixed hyperbolic or spherical manifold \cite{chami2019hyperbolic, liu2019hyperbolic, zhang2021hyperbolic}. Although these approaches effectively capture certain structural patterns, they implicitly impose a uniform geometric bias over the entire graph. As a result, they struggle to faithfully represent graphs whose local geometry diverges from the chosen global space, leading to suboptimal expressiveness and limited adaptability, as visually demonstrated in \cref{fig: introduction} where single-geometry models only excel on their native substructures.

To capture such geometric heterogeneity, a promising direction is to enable graph representations to jointly exploit multiple geometric spaces. 
Recent works have explored this via mixed-curvature representation learning \cite{cao2025hyperbolic, sun2022self, yang2022dual, sun2024motif}. However, these methods still blend geometries at a global level, applying the same geometric configuration to all nodes, thus failing to achieve fine-grained node-wise adaptation. 
GraphMoRE \cite{guo2025graphmore} introduces a Mixture of Riemannian Experts framework with topology-aware gating for node-specific expert routing.
However, its gating mechanism relies on implicit distortion minimization, lacking supervision from a fundamental geometric prior. This absence of inductive bias results in a routing strategy that is geometrically opaque and prone to expert collapse—where data-driven optimization may gravitate towards trivial solutions rather than adhering to the graph's intrinsic topology.

To bridge these gaps, we propose GeoMoE, a novel geometric mixture-of-experts framework that effectively injects an explicit geometric prior into node-wise expert fusion. The key idea is to utilize Ollivier-Ricci curvature (ORC) \cite{ollivier2009ricci} as a principled descriptor of local graph geometry, accurately capturing neighborhood expansion or contraction and thus indicating whether a region behaves more Euclidean, hyperbolic, or spherical. By treating ORC as an explicit geometric supervision signal, GeoMoE moves beyond purely task-driven routing and enforces geometry-consistent adaptation, aiming to improve both interpretability and robustness under locally varying geometric regimes.


Specifically, GeoMoE orchestrates three specialized experts (Euclidean, hyperbolic, spherical) via a graph-aware gating network.
Crucially, we introduce a gating alignment loss that regularizes the gating distribution toward an ORC-derived target, making expert preferences explicitly consistent with intrinsic discrete geometry. 
To further promote geometric discriminability, we design a curvature-aware contrastive objective that aligns embeddings with their curvature-matched experts, while contrasting them against mismatched experts and inter-node hard negatives, thereby disentangling geometrically distinct structures. 

Our main contributions are summarized as follows: 
\begin{itemize}
    \item We propose GeoMoE, a novel geometric mixture-of-experts framework. By integrating Ollivier-Ricci Curvature as an explicit and informative geometric prior, we transform expert routing into a node-adaptive, geometry-consistent, and interpretable mechanism.
    \item We design a tailored optimization scheme comprising a curvature-guided gating alignment and a curvature-aware contrastive objective. This strategy regularizes the routing logic, ensuring precise topological alignment and enhanced geometric discriminability.
    \item We establish formal theoretical guarantees—including a synergy bound for expert fusion, gating-curvature monotonic consistency, and a mutual information lower bound for contrastive learning—with extensive experiments confirming GeoMoE's state-of-the-art performance and corroborating our theoretical insights.
\end{itemize}


\section{Related work}
\subsection{Ricci Curvature-Driven Graph Learning}
Discrete Ricci curvature—predominantly Ollivier–Ricci curvature(ORC) and Forman–Ricci curvature(FRC)—serves as a principled descriptor of local graph geometry \cite{lin2011ricci, ollivier2009ricci, forman2003bochner}. A major line of research leverages these signals to mitigate structural bottlenecks, such as over-squashing and over-smoothing, via curvature-guided rewiring, sparsification, or Ricci flow-based topology refinement \cite{sun2023deepricci, nguyen2023revisiting, zhang2023ricci, feng2024graph}. Beyond topology refinement, recent advances increasingly integrate curvature as a geometric prior within representation learning. This includes guiding adaptive embedding spaces \cite{sun2023self, zhang2023ricci}
and enhancing domain-specific modeling in biomolecular interaction prediction \cite{shen2024curvature} and knowledge graphs \cite{luo2025local}. Most recently, the paradigm expands into information-theoretic frameworks \cite{fu2025discrete} and Spectro-Riemannian architectures \cite{grover2025spectro}, bridging the gap between discrete geometry and continuous representation theory.

Despite substantial progress, most curvature-driven methods treat Ricci curvature as an auxiliary heuristic rather than a fundamental guiding principle for representation formation. Moreover, they lack mechanisms for fine-grained node-wise adaptation and explicit geometric discriminability.

\subsection{Mixture-of-Experts in Graph Learning}
The Mixture-of-Experts (MoE) \cite{shazeer2017outrageously} paradigm has recently been adapted to graph learning to address structural and semantic heterogeneity by partitioning computation across specialized subnetworks \cite{hu2021graph}. Early explorations introduced MoE for graph-level classification and node representation, aiming to capture diverse structural motifs through expert diversity \cite{errica2021graph, wang2023graph}. This direction has evolved into domain-specific expert designs, such as topology-aware modules for molecular property prediction \cite{kim2023learning, shirasuna2024multi}, link predictors with heterogeneous capacity \cite{ma2024mixture}, and generalizable high-level synthesis modeling \cite{li2025hierarchical}. A more recent and impactful trend focuses on fine-grained node-wise routing, where individual nodes are dynamically assigned to the most suitable experts based on local context—e.g., via adaptive filtering \cite{han2024node}, memory-augmented routers \cite{huang2025graph}, or multi-view alignment \cite{wu2025graph}. Notably, GraphMoRE \cite{guo2025graphmore} extends this paradigm to non-Euclidean settings, introducing a Mixture of Riemannian Experts to mitigate complex geometric heterogeneity through topology-aware manifold routing. 

In spite of these advancements, existing routing mechanisms remain largely implicit, relying on task-driven heuristics that lack explicit alignment with the graph’s intrinsic geometry. 
GeoMoE addresses these limitations by directly internalizing Ollivier-Ricci curvature as a principled supervisory signal. By explicitly guiding expert assignment, GeoMoE ensures that each node is routed to the geometric expert that faithfully reflects its local geometric signature.

\section{Preliminaries}
This section provides the theoretical foundation and mathematical framework necessary for our proposed GeoMoE model. We first establish a consistent notation system and define the Riemannian model spaces utilized by our geometric experts. We then present a rigorous formulation of ORC, which serves as the foundational guiding signal for our gating mechanism and representation formation.

\subsection{Notations}
A graph is formally defined as $\mathcal{G} = (\mathcal{V}, \mathcal{E})$, where $\mathcal{V} = \{v_1, v_2, \dots, v_N\}$ is the set of $N$ nodes and $\mathcal{E} \subseteq \mathcal{V} \times \mathcal{V}$ denotes the set of $m$ edges. The connectivity of $\mathcal{G}$ is characterized by the adjacency matrix $\mathbf{A} \in \{0, 1\}^{N \times N}$, where $A_{ij} = 1$ if $(v_i, v_j) \in \mathcal{E}$ and $0$ otherwise. Each node $v_i$ is associated with a $d$-dimensional feature vector $\mathbf{x}_i \in \mathbb{R}^d$, forming the node feature matrix $\mathbf{X} \in \mathbb{R}^{N \times d}$. The neighborhood of a node $v_i$ is denoted by $\mathcal{N}(v_i) = \{v_j \mid (v_i, v_j) \in \mathcal{E}\}$.

In geometric representation learning, node embeddings are modeled in Riemannian manifolds to reflect the heterogeneous geometric nature of graph structures. We consider three canonical Riemannian manifolds of constant sectional curvature $\kappa \in \mathbb{R}$: the \textbf{Euclidean space} $\mathbb{R}^d$ ($\kappa = 0$); the \textbf{hyperbolic space} $\mathbb{H}_\kappa^d$ ($\kappa < 0$), modeled via the Poincaré ball $\{\mathbf{x} \in \mathbb{R}^d : \|\mathbf{x}\| < 1/\sqrt{-\kappa}\}$; the \textbf{spherical space} $\mathbb{S}_\kappa^d$ ($\kappa > 0$), embedded in $\mathbb{R}^{d+1}$ as $\{\mathbf{x} \in \mathbb{R}^{d+1} : \|\mathbf{x}\| = 1/\sqrt{\kappa}\}$.


For a Riemannian manifold $\mathcal{M}_\kappa$, we denote its exponential map at point $\mathbf{o}$ (typically the origin) as $\exp_{\mathbf{o}} : T_{\mathbf{o}}\mathcal{M}_\kappa \to \mathcal{M}_\kappa$, and its inverse logarithmic map as $\log_{\mathbf{o}} : \mathcal{M}_\kappa \to T_{\mathbf{o}}\mathcal{M}_\kappa$, where $T_{\mathbf{o}}\mathcal{M}_\kappa$ is the associated tangent space at $\mathbf{o}$. The geodesic distance and Riemannian inner product on $\mathcal{M}_\kappa$ are denoted by $d_\kappa(\cdot, \cdot)$ and $\langle \cdot, \cdot \rangle_\kappa$, respectively.

\subsection{Ollivier–Ricci Curvature}
Ollivier-Ricci curvature (ORC)\cite{ollivier2009ricci} is a discrete curvature measure grounded in optimal transport theory, quantifying local graph geometry via Wasserstein distance between neighborhood probability measures. We leverage ORC as a geometric prior to guide node-wise expert routing.

\begin{definition}[Edge-level Ollivier-Ricci Curvature]\cite{ache2019ricci, yang2023kappahgcn}
Let $\mathcal{G} = (\mathcal{V}, \mathcal{E})$ be a locally finite, connected, and simple graph. For any two distinct nodes $v_i, v_j \in \mathcal{V}$ connected by an edge, their edge-level Ollivier-Ricci curvature $\kappa(v_i, v_j)$ is defined as:
\begin{equation}
    \kappa(v_i, v_j) = 1 - \frac{W_1(\mu_{v_i}, \mu_{v_j})}{d(v_i, v_j)},
    \label{eq:edge_orc}
\end{equation}
where $d(v_i, v_j)$ denotes the shortest-path distance between $v_i$ and $v_j$ on $\mathcal{G}$, and $W_1(\mu_{v_i}, \mu_{v_j})$ is the first-order Wasserstein distance (Definition~\ref{def:wassdist}) measuring the discrepancy between two probability measures $\mu_{v_i}$ and $\mu_{v_j}$.
\end{definition}

\begin{definition}[Node-level Ollivier-Ricci Curvature]\cite{ache2019ricci, yang2023kappahgcn}
The node-level Ollivier-Ricci curvature $\kappa(v_i)$ for a node $v_i \in \mathcal{V}$ is defined as the average of edge curvatures over its incident edges:
\begin{equation}
    \kappa(v_i) = \frac{1}{|\mathcal{N}(v_i)|} \sum_{v_j \in \mathcal{N}(v_i)} \kappa(v_i, v_j),
    \label{eq:node_orc}
\end{equation}
where $\kappa(v_i, v_j)$ is given by \cref{eq:edge_orc}. This scalar summary captures the local geometric signature of $v_i$, providing a quantitative basis for the subsequent expert routing. 
\end{definition}



\section{Methodology}\label{Methodology}
Our GeoMoE consists of three synergistic components: (i) a tripartite bank of geometric experts for Euclidean, hyperbolic, and spherical spaces; (ii) a curvature-guided gating alignment mechanism to synchronize routing decisions with intrinsic geometry via a graph-aware router and a gate alignment loss; and (iii) a curvature-aware contrastive learning objective to refine the discriminative power of the fused representations. By integrating ORC as a principled prior, we ensure that the gating mechanism is structurally grounded and the resulting embeddings maintain high geometric fidelity. The overall architecture is illustrated in \cref{figure: framework}.

\begin{figure*}[htbp]
    \centering
    \includegraphics[width=0.95\textwidth]{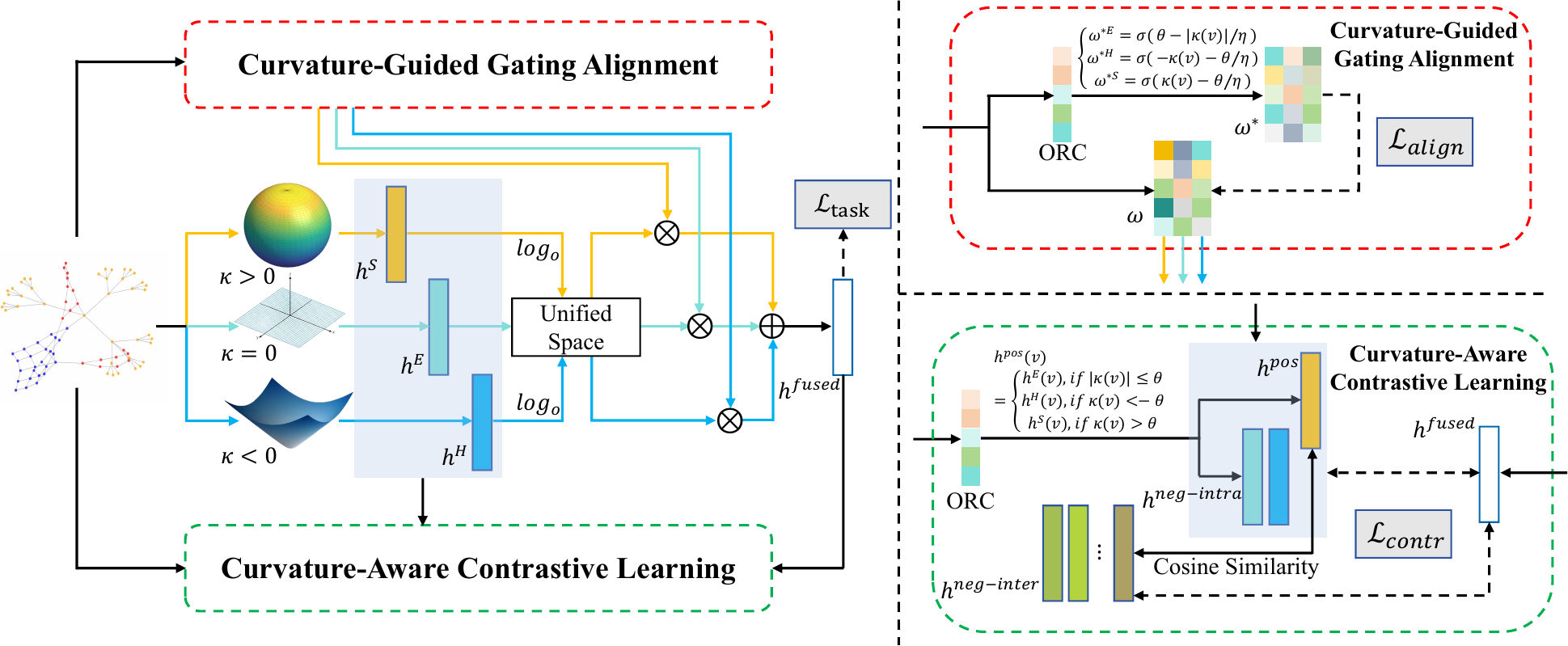}
    \caption{
      An overview of GeoMoE. We first encode the input graph via Euclidean, hyperbolic, and spherical geometric experts, then dynamically fuse their outputs through a curvature-guided gating alignment mechanism. The curvature-aware contrastive learning objective further refines the fused representations to enhance geometric discriminability. 
    }
    \label{figure: framework}
\end{figure*}

\subsection{Geometric Mixture-of-Experts}\label{Geometric Mixture-of-Experts}


Local structures in real-world graphs exhibit profound geometric heterogeneity, with flat, hierarchical, and compact topologies coexisting. To address this, we propose a tri-expert framework that encodes the graph across Euclidean, hyperbolic, and spherical spaces, followed by a dynamic fusion mechanism to effectively reconcile these disparate geometric signatures into a comprehensive representation.

\textbf{Geometric experts.}
Each expert is tailored to the intrinsic properties of its manifold.
Specifically, we instantiate three parallel GNN experts to process signals within their respective optimal geometries: a Euclidean expert for flat patterns, a hyperbolic expert (Poincaré ball) for hierarchies, and a spherical expert for cyclical motifs.
By partitioning the representational task across these specialized encoders, GeoMoE ensures that each structural signature is processed within a space of optimal inductive bias.
Formally, the encoding process for each expert $m \in \{E, H, S\}$ is:
\begin{equation}\label{eq: GNN network}
    \mathbf{h}^m = \text{GNN}^m(\mathbf{X}, \mathbf{A}),
\end{equation}
where $\text{GNN}^m$ denotes the manifold-specific graph neural networks and $\mathbf{h}^m$ represents the resulting node embeddings in the corresponding underlying geometric space.


\textbf{Dynamic feature fusion.}
To achieve effective integration, we employ a gating-weighted summation strategy. For each node $v_i$, embeddings from non-Euclidean manifolds are projected into a unified Euclidean tangent space at the origin $\mathbf{o}$ via logarithmic maps for feature alignment, and fused as:
\begin{equation}\label{eq: fused embedding}
\begin{aligned}
     \mathbf{h}^{\text{fused}}(v_i) 
     & = \omega_i^E \cdot \mathbf{h}^E(v_i) \\
     & + \omega_i^H \cdot \log_{\mathbf{o}}(\mathbf{h}^H(v_i)) + \omega_i^S \cdot \log_{\mathbf{o}}(\mathbf{h}^S(v_i)),
\end{aligned}
\end{equation}
where $\omega_i^E, \omega_i^H, \omega_i^S$ are normalized gating weights such that $\sum_{m \in \{E,H,S\}} \omega_i^m = 1$. 
Notably, these weights are not learned in a black-box fashion. Instead, their distribution is explicitly regularized by the curvature-guided alignment and contrastive learning objectives detailed in \cref{Curvature-Guided Gating Alignment} and \cref{Curvature-Aware Contrastive Learning}. This ensures that the expert assignment is physically grounded in the local geometry of the graph.

Beyond empirical intuition, this synergistic integration aims to overcome the fundamental representation bottlenecks of single-manifold models. By adaptively routing nodes to their curvature-optimal geometric spaces, GeoMoE theoretically expands the expressive capacity of the latent representation compared to any fixed-curvature baseline. To formally justify this, we establish \cref{thm:synergy} on the synergy bound of our geometric mixture-of-experts architecture.

\begin{theorem}[Synergy Bound of Geometric Mixture-of-Experts]\label{thm:synergy}
Let $\ell^m(v_i) = \ell(\mathbf{h}^m(v_i), y(v_i))$ be the loss of expert $m \in \{E, H, S\}$ for node $v_i$, and let $\mathbf{h}^{\text{fused}}(v_i)$ be the fused representation defined in \cref{eq: fused embedding} with gating weights $\boldsymbol{\omega}_i$. Assume the loss function $\ell(\cdot, y)$ is $L$-Lipschitz continuous. Suppose each expert has an advantage region $\Omega_m \subset \mathcal{V}$ with positive probability measure $\mu(\Omega_m) > 0$, such that:
\[
\forall v_i \in \Omega_m,\ \ell^m(v_i) \leq \ell^n(v_i) - \epsilon,\ \forall n \neq m.
\]
Let $e^m$ be the corresponding one-hot vector for expert $m$. If the gating error satisfies $\mathbb{E}[\|\boldsymbol{\omega}_i - e^m\|_1 \mid v_i \in \Omega_m] \leq \delta$ for a sufficiently small positive constant $\delta > 0$, then:
\[
\mathbb{E}[\ell(\mathbf{h}^{\text{fused}}, y)] < \min_{m \in \{E,H,S\}} \mathbb{E}[\ell^m].
\]
\end{theorem}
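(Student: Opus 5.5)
The plan is to compare the fused loss with each single-expert loss node by node, splitting over the advantage regions and their complement. To make $\mathbf{h}^m(v_i)$ and $\mathbf{h}^{\text{fused}}(v_i)$ comparable, I will read $\mathbf{h}^H,\mathbf{h}^S$ as their tangent-space images $\log_{\mathbf{o}}(\cdot)$. I will also assume, as the statement implicitly needs, that the expert embeddings are bounded: $\|\mathbf{h}^m(v_i)\|\le B$ for all $m,i$. Since $\sum_m\omega_i^m=1$, for any $v_i\in\Omega_m$ we have
\[
\mathbf{h}^{\text{fused}}(v_i)-\mathbf{h}^m(v_i)=\sum_{n}(\omega_i^n-e^m_n)\,\mathbf{h}^n(v_i),
\]
so $\|\mathbf{h}^{\text{fused}}(v_i)-\mathbf{h}^m(v_i)\|\le B\|\boldsymbol{\omega}_i-e^m\|_1$. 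The $L$-Lipschitz property then gives $\ell(\mathbf{h}^{\text{fused}}(v_i),y)\le \ell^m(v_i)+LB\|\boldsymbol{\omega}_i-e^m\|_1$. Taking conditional expectations yields $\mathbb{E}[\ell^{\text{fused}}\mid\Omega_m]\le \mathbb{E}[\ell^m\mid\Omega_m]+LB\delta$.

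Next I fix an arbitrary competitor expert $n$ and decompose $\mathbb{E}[\ell^n]=\sum_m \mu(\Omega_m)\mathbb{E}[\ell^n\mid\Omega_m]+\mu(\Omega_0)\mathbb{E}[\ell^n\mid\Omega_0]$, where $\Omega_0=\mathcal{V}\setminus\bigcup_m\Omega_m$. The $\Omega_m$ are disjoint, since the strict $\epsilon$-advantage cannot hold for two experts at the same node. On each $\Omega_m$ with $m\neq n$, the advantage hypothesis gives $\mathbb{E}[\ell^m\mid\Omega_m]\le\mathbb{E}[\ell^n\mid\Omega_m]-\epsilon$. Combining this with the previous paragraph, the fused loss beats expert $n$ on $\Omega_m$ by at least $\epsilon-LB\delta$. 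On $\Omega_n$ itself, fused is worse than $n$ by at most $LB\delta$. Summing,
\[
\mathbb{E}[\ell^{\text{fused}}]-\mathbb{E}[\ell^n]\le -\epsilon\sum_{m\ne n}\mu(\Omega_m)+LB\delta\sum_m\mu(\Omega_m)+\mu(\Omega_0)\,\Delta_0,
\]
where $\Delta_0$ is the excess of the fused loss over $\ell^n$ on $\Omega_0$.

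The main obstacle is the term on $\Omega_0$, where the theorem says nothing about the gates. I would handle it in one of two ways. The first option is to assume the advantage regions cover $\mathcal{V}$ up to measure zero, so that $\mu(\Omega_0)=0$. The second is to use the Lipschitz bound again: $\ell^{\text{fused}}\le \max_k\ell^k+LB\cdot 2$ on $\Omega_0$, which bounds $\Delta_0$ by a constant $C$, and then require $\mu(\Omega_0)C$ to be smaller than the gain. In the clean covering case the right-hand side is negative as soon as
\[
\delta<\frac{\epsilon\,\min_n\sum_{m\ne n}\mu(\Omega_m)}{LB},
\]
and the numerator is positive because every $\mu(\Omega_m)>0$. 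This is exactly the ``sufficiently small $\delta$'' of the statement. Since $n$ was arbitrary, the strict inequality holds against $\min_n\mathbb{E}[\ell^n]$.

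I expect the write-up to state the boundedness of the embeddings and the treatment of $\Omega_0$ explicitly as the implicit assumptions behind ``sufficiently small''.
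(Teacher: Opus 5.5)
Your proof is correct and follows essentially the paper's argument. Like the paper, you turn the $\ell_1$ gating error into a loss excess via Lipschitzness and a uniform bound on the tangent-space expert embeddings, split $\mathcal{V}$ into the advantage regions plus the uncovered remainder, and add the smallness condition on $\mu\bigl(\mathcal{V}\setminus\bigcup_m\Omega_m\bigr)$ that the statement omits. The differences are organizational: the paper routes through the pointwise oracle $\ell^*=\min_m\ell^m$, merely asserts a gap $\eta_0>0$ below $\min_m\mathbb{E}[\ell^m]$, and attributes the embedding bound to compactness; your direct comparison against each competitor $n$ instead derives that gap explicitly as $\epsilon\min_n\sum_{m\neq n}\mu(\Omega_m)$, assumes the bound openly, and justifies the disjointness of the $\Omega_m$ that the paper uses tacitly.
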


\begin{proof}
See Appendix~\ref{Proof_synergy} for the complete proof.
\end{proof}


\subsection{Curvature-Guided Gating Alignment}\label{Curvature-Guided Gating Alignment}


Leveraging the property that ORC $\kappa(v)$ characterizes local topology—where positive, negative, and near-zero values indicate spherical, hyperbolic, and Euclidean topologies, respectively—we design an ORC-guided gating alignment mechanism to ensure that expert assignments are physically grounded in the local structural attributes of nodes.

\textbf{Graph-aware gating network.}
Traditional gating mechanisms often rely solely on node features, ignoring the underlying topological structure. This can lead to suboptimal routing where nodes with similar features but distinct neighborhood structures receive identical weights. To address this, we adopt a graph-aware gating network that integrates local topological information through neighborhood aggregation. 

First, a graph convolutional layer is employed to fuse the raw node features with the surrounding structural context:
\begin{equation}
    \mathbf{x}_{\text{agg}} = \text{GCN}_{\text{gate}}(\mathbf{X}, \mathbf{A}),
\end{equation}
where $\mathbf{X}$ and $\mathbf{A}$ are the input feature and adjacency matrices. Then, the expert weights $\mathbf{\omega}$ are generated via a multi-layer perceptron with a temperature-controlled Softmax:
\begin{equation}\label{eq: gating weight}
    \mathbf{\boldsymbol{\omega}} = \text{Softmax}\left(\frac{\text{MLP}(\mathbf{x}_{\text{agg}})}{\tau_g}\right),
\end{equation}
where $\mathbf{\boldsymbol{\omega}} = [\omega^E, \omega^H, \omega^S]^\top$ represents the weights for the Euclidean, hyperbolic, and spherical experts, and $\tau_g$ is the temperature controlling the distribution sharpness.

\textbf{Gating alignment loss.}
To enforce consistency between learned weights and the geometric prior $\kappa(v)$, we introduce an alignment loss based on Kullback-Leibler (KL) divergence. We first construct the geometric target weights $\mathbf{\boldsymbol{\omega}}^* = [\omega^{*E}, \omega^{*H}, \omega^{*S}]^\top$ as soft labels derived from $\kappa(v)$:
\begin{equation}\label{target weight}
\begin{cases} 
    \omega^{*E} = \sigma\left( \frac{\theta - |\kappa(v)|}{\eta} \right), \\
    \omega^{*H} = \sigma\left( \frac{-\kappa(v) - \theta}{\eta} \right), \\
    \omega^{*S} = \sigma\left( \frac{\kappa(v) - \theta}{\eta} \right),
\end{cases}
\end{equation}
where $\theta>0$ is the curvature threshold for region partitioning, $\eta>0$ is a smoothing factor, and $\sigma(\cdot)$ is the Sigmoid function. These components quantify target expertise intensities for the Euclidean (near-zero curvature), hyperbolic (negative curvature), and spherical (positive curvature) domains, respectively. After normalizing the targets such that $\boldsymbol{\omega}^* = \boldsymbol{\omega}^* / \sum_m \omega^{*m}$, the alignment loss is defined as:
\begin{equation}\label{eq: alignment loss}
    \mathcal{L}_{\text{align}} = \text{KL}(\mathbf{\boldsymbol{\omega}}^* \parallel \mathbf{\boldsymbol{\omega}}).
\end{equation}

Theoretically, minimizing the alignment loss $\mathcal{L}_{\text{align}}$ enforces a strict monotonic consistency between the gating weights and the node-level ORC values, which guarantees the structural rationality of the expert allocation. We formally establish this pivotal property through the following theorem.


\begin{theorem}[Monotonic Consistency Between Gating Weights and ORC]\label{thm:Monotonic}
Let $\kappa(v)$ be the node-level ORC of node $v$. Under the alignment loss $\mathcal{L}_{\text{align}}$ with target weights defined in \cref{target weight}, the learned gating weights at convergence $\boldsymbol{\omega}(v) = [\omega^E(v), \omega^H(v), \omega^S(v)]^\top$ satisfy the following regional monotonicity properties:
\begin{enumerate*}[label=(\roman*)]
    \item \textbf{Spherical Domain:} On $\Omega_+ = \{ v \in \mathcal{V} \mid \kappa(v) \geq \theta \}$, the spherical weight $\omega^S$ is strictly increasing with respect to $\kappa(v)$, i.e., $\frac{\partial \omega^S(v)}{\partial \kappa(v)} > 0$;
    \item \textbf{Hyperbolic Domain:} On $\Omega_- = \{ v \in \mathcal{V} \mid \kappa(v) \leq -\theta \}$, the hyperbolic weight $\omega^H$ is strictly decreasing with respect to $\kappa(v)$, i.e., $\frac{\partial \omega^H(v)}{\partial \kappa(v)} < 0$;
    \item \textbf{Euclidean Domain:} On $\Omega_0 = \{ v \in \mathcal{V} \mid |\kappa(v)| \leq \theta \}$, the Euclidean weight $\omega^E$ is strictly decreasing with respect to the curvature magnitude $|\kappa(v)|$, i.e., $\frac{\partial \omega^E(v)}{\partial |\kappa(v)|} < 0$.
\end{enumerate*}
\end{theorem}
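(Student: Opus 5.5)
The plan is to reduce the statement to an elementary calculus fact about the normalized target map $\kappa \mapsto \boldsymbol{\omega}^*(\kappa)$ in \cref{target weight}. First I would make precise what ``at convergence'' means. Since $\mathcal{L}_{\text{align}} = \text{KL}(\boldsymbol{\omega}^* \parallel \boldsymbol{\omega}) \geq 0$, with equality iff $\boldsymbol{\omega} = \boldsymbol{\omega}^*$, I will assume realizability: the graph-aware gate in \cref{eq: gating weight} is expressive enough (GCN plus MLP with softmax) to attain the global minimum node-wise. The alignment term then drives $\boldsymbol{\omega}(v) = \boldsymbol{\omega}^*(v)$ for every $v$.

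Because $\kappa(v)$ takes discrete values on a finite graph, I read the partial derivatives in the theorem as derivatives of the smooth function $\kappa \mapsto \boldsymbol{\omega}^*(\kappa)$ evaluated at $\kappa = \kappa(v)$. This realizability and interpretation step is the one genuinely non-mechanical point. I expect it to be the main obstacle, since it must be stated as an assumption (exact minimization, no competing task loss) rather than proved. The rest is computation.

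Write $a = \sigma((\theta-|\kappa|)/\eta)$, $b = \sigma((-\kappa-\theta)/\eta)$, $c = \sigma((\kappa-\theta)/\eta)$ and $Z = a+b+c$, so that $\omega^S = c/Z$, $\omega^H = b/Z$, $\omega^E = a/Z$. Using $\sigma' = \sigma(1-\sigma) > 0$, the quotient rule gives, for example,
\begin{equation*}
\frac{\partial \omega^S}{\partial \kappa} = \frac{c'(a+b) - c(a'+b')}{Z^2}.
\end{equation*}

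\textbf{Spherical domain.} On $\Omega_+$ we have $\kappa \geq \theta > 0$, so $|\kappa| = \kappa$. Then $c' = c(1-c)/\eta > 0$, $a' = -a(1-a)/\eta < 0$ and $b' = -b(1-b)/\eta < 0$. Hence the numerator is strictly positive, which gives (i).

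\textbf{Hyperbolic domain.} On $\Omega_-$ we have $|\kappa| = -\kappa$. Then $b' < 0$, $a' = +a(1-a)/\eta > 0$ and $c' > 0$. Therefore $\partial\omega^H/\partial\kappa = [b'(a+c) - b(a'+c')]/Z^2 < 0$, which gives (ii).

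\textbf{Euclidean domain.} Here I parametrize by $r = |\kappa|$. Since swapping $\kappa \to -\kappa$ exchanges $b$ and $c$, the quantities $a$ and $s := b+c = \sigma((-r-\theta)/\eta) + \sigma((r-\theta)/\eta)$ depend only on $r$. Now $\omega^E = a/(a+s)$, so $\partial\omega^E/\partial r$ has the sign of $a' s - a s'$. We have $a' = -a(1-a)/\eta < 0$. Also
\begin{equation*}
\eta s' = \sigma'\!\left(\frac{r-\theta}{\eta}\right) - \sigma'\!\left(\frac{r+\theta}{\eta}\right) \geq 0,
\end{equation*}
because $\sigma'$ is even and strictly decreasing in $|x|$, and $|r-\theta| \leq r+\theta$ for $r \geq 0$. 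Hence $a's - as' < 0$ strictly, which gives (iii). At $r = 0$ I would note that $s' = 0$ but $a' < 0$ still gives strictness, and that the derivative there is understood one-sidedly.

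I would close by remarking that (i) and (ii) in fact hold on all of $\kappa > 0$ and $\kappa < 0$ respectively. The thresholded regions are simply where each expert dominates.
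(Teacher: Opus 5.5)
Your proposal is correct and takes essentially the paper's route: identify the minimizer of $\mathrm{KL}(\boldsymbol{\omega}^*\parallel\boldsymbol{\omega})$ as $\boldsymbol{\omega}=\boldsymbol{\omega}^*$ (you use Gibbs' inequality, the paper a Lagrange multiplier), then do a quotient-rule sign analysis of the normalized sigmoids $a,b,c$. Your version is more complete than the paper's. The paper works out only $\Omega_+$ and calls the other regions ``analogous'', but the Euclidean case genuinely needs your extra observations: $b+c$ depends only on $|\kappa|$, and $\sigma'$ is decreasing in $|x|$. Your explicit realizability assumption also states openly what the paper's ``at convergence'' step silently presupposes. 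Finally, the paper's extreme-value-theorem bound $C_+$ is not needed for the statement as written.
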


\begin{proof}
See Appendix~\ref{Proof_monotonic} for the complete proof.
\end{proof}

\subsection{Curvature-Aware Contrastive Learning}\label{Curvature-Aware Contrastive Learning}

While curvature-guided alignment ensures geometric consistency, it may not guarantee the discriminative power.
To address this, we propose a curvature-aware contrastive objective, employing ORC values as geometric priors to assign positive samples to the most compatible geometric experts and construct informative negative sets. This enhances the discriminability and robustness of the fused representations.

\textbf{Positive sample selection.}
Based on the geometric semantics of ORC values $\kappa(v)$, we define an ORC-driven hard selection mechanism to identify the optimal expert representation as the positive sample $\mathbf{h}^{\text{pos}}(v)$. This systematic selection logic remains strictly consistent with the ideal weighting scheme previously defined in \cref{Curvature-Guided Gating Alignment}:
\begin{equation}
    \mathbf{h}^{\text{pos}}(v) = 
    \begin{cases} 
        \mathbf{h}^E(v), & \text{if } |\kappa(v)| \le \theta; \\
        \mathbf{h}^H(v), & \text{if } \kappa(v) < -\theta; \\
        \mathbf{h}^S(v), & \text{if } \kappa(v) > \theta .
    \end{cases}
\end{equation}
Here, $\theta > 0$ denotes the identical curvature threshold used for region partitioning as in \cref{Curvature-Guided Gating Alignment}. This formulation explicitly ensures that the positive samples are highly congruent with the primary gating alignment objectives.

\textbf{Dual-strategy negative construction.}
To further sharpen the decision boundaries, we design a dual-strategy for constructing negative sets, comprising \textit{intra-node} and \textit{inter-node} negative samples. For intra-node negatives, we select the non-optimal expert representations of the current node $v$, which encourages the model to differentiate between distinct geometric manifolds. Simultaneously, we identify inter-node hard negatives by mining other nodes that exhibit high similarity to $\mathbf{h}^{\text{pos}}(v)$ but possess distinct geometric properties. Specifically, we compute the cosine similarity between the positive embedding $\mathbf{h}^{\text{pos}}(v)$ and the fused representations $\mathbf{h}^{\text{fused}}(u)$ of all nodes $u \in \mathcal{V} \setminus \{v\}$, selecting the top-$(K-2)$ most similar embeddings as hard negatives.

\textbf{Contrastive loss.}
To formalize the training objective, we define $s_{v, \text{pos}} = \text{sim}(\mathbf{h}^{\text{fused}}(v), \mathbf{h}^{\text{pos}}(v))$ and $s_{v, j} = \text{sim}(\mathbf{h}^{\text{fused}}(v), \mathbf{h}^{\text{fused}}(v_j))$ as the cosine similarity scores for the positive pair and the $j$-th negative pair, respectively. Based on the standard InfoNCE framework \cite{oord2018representation}, the contrastive loss is mathematically formulated as:
\begin{equation}\label{eq: contrastive loss}
    \mathcal{L}_{\text{contr}} = - \frac{1}{N} \sum_{v \in \mathcal{V}} \log \frac{e^{s_{v, \text{pos}} / \tau_c}}{e^{s_{v, \text{pos}} / \tau_c} + \sum_{j \in \mathcal{N}^-(v)} e^{s_{v, j} / \tau_c}},
\end{equation}
where $\mathcal{N}^-(v)$ denotes the set of negative samples of node $v$, and $\tau_c$ is the contrastive temperature parameter. 


Curvature-aware contrastive learning maximizes the mutual information between node representations and their underlying geometric signatures. To characterize this information-theoretic guarantee, we establish the following theorem.



\begin{theorem}[Geometry-Consistent Mutual Information Maximization]\label{thm:Mutual_information}
Following the InfoNCE principle \cite{oord2018representation, poole2019variational}, let $\mathbf{h}^{\text{fused}}$ and $\mathbf{h}^*$ denote the random variables for the fused representation and the geometric prior embedding (derived from the ORC-guided positive selection), respectively. For a batch containing one positive sample and $K$ negative samples drawn independently from the marginal distribution $p(\mathbf{h}^*)$, the contrastive loss $\mathcal{L}_{\text{contr}}$ serves as a lower bound estimator:
\begin{equation}
I(\mathbf{h}^{\text{fused}}; \mathbf{h}^*) \geq \log(K+1) - \mathcal{L}_{\text{contr}}.
\end{equation}
Consequently, minimizing $\mathcal{L}_{\text{contr}}$ maximizes the variational lower bound of the mutual information between the node representations and their curvature-driven geometric priors, thereby effectively promoting geometric consistency and mitigating representational collapse in the latent space.
\end{theorem}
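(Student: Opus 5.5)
We will follow the standard InfoNCE argument of \cite{oord2018representation, poole2019variational}, specialized to our critic. We read the per-node term of \cref{eq: contrastive loss} as a $(K+1)$-way classification problem. The critic is $f(\mathbf{h}^{\text{fused}}, \mathbf{h}) = \exp(\text{sim}(\mathbf{h}^{\text{fused}}, \mathbf{h})/\tau_c)$. The candidate set $X = \{\mathbf{h}_0, \mathbf{h}_1, \dots, \mathbf{h}_K\}$ is formed as follows. Exactly one element, the positive $\mathbf{h}^*$ chosen by the ORC-guided hard selection, is drawn from the conditional $p(\mathbf{h}^* \mid \mathbf{h}^{\text{fused}})$. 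The other $K$ elements are drawn i.i.d.\ from the marginal $p(\mathbf{h}^*)$, as the statement assumes. The loss is then the expected negative log-probability that the critic-induced softmax picks the true index. We stress that the bound is proved for this idealized sampling model. The actual intra-node and top-$(K-2)$ hard negatives violate the i.i.d.\ marginal assumption, so the argument concerns the estimator as stated, not the exact mining procedure.

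\emph{Step 1 (Bayes-optimal classifier).} Let the positive sit at a uniformly random index $i$. The posterior that index $i$ is the positive is
\[
p(i \mid X, \mathbf{h}^{\text{fused}}) = \frac{r(\mathbf{h}_i)}{\sum_{j=0}^{K} r(\mathbf{h}_j)}, \qquad r(\mathbf{h}) = \frac{p(\mathbf{h} \mid \mathbf{h}^{\text{fused}})}{p(\mathbf{h})}.
\]
The InfoNCE loss is a cross-entropy between this true posterior and the critic's softmax. It is therefore minimized, over all critics, when $f \propto r$. Hence $\mathcal{L}_{\text{contr}} \geq \mathcal{L}^{\text{opt}}$, the loss obtained with $f = r$.

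\emph{Step 2 (bounding the optimal loss).} Substitute $f = r$:
\[
\mathcal{L}^{\text{opt}} = \mathbb{E}\log\Big(1 + \frac{p(\mathbf{h}^*)}{p(\mathbf{h}^* \mid \mathbf{h}^{\text{fused}})}\sum_{j=1}^{K} r(\mathbf{h}_j)\Big).
\]
Each negative satisfies $\mathbb{E}_{p(\mathbf{h})} r(\mathbf{h}) = 1$, so the sum concentrates near $K$. This gives
\[
\mathcal{L}^{\text{opt}} \gtrsim \mathbb{E}\log\Big(1 + K\,\frac{p(\mathbf{h}^*)}{p(\mathbf{h}^* \mid \mathbf{h}^{\text{fused}})}\Big) \geq \log K - I(\mathbf{h}^{\text{fused}}; \mathbf{h}^*),
\]
which recovers Oord et al.'s argument. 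To reach the sharper constant $\log(K+1)$ rigorously, and to avoid the approximation, we would instead use the multi-sample bound of \cite{poole2019variational}. There, $I \geq \mathbb{E}\big[\log \frac{f(x,y_0)}{\frac{1}{K+1}\sum_{j=0}^{K} f(x,y_j)}\big]$ holds for \emph{any} positive critic. The proof rewrites this as a lower bound of the Nguyen--Wainwright--Jordan/Donsker--Varadhan form, using a baseline that averages the critic over the $K+1$ samples, and applies $\log z \leq z - 1$. Expanding the right-hand side gives $\log(K+1) - \mathcal{L}_{\text{contr}}$ exactly, for every critic, which is what the statement claims.

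\emph{Step 3 (conclusion).} Since the bound holds for arbitrary critics, minimizing $\mathcal{L}_{\text{contr}}$ over network parameters raises a valid lower bound on $I(\mathbf{h}^{\text{fused}}; \mathbf{h}^*)$, which yields the consequence in the statement. The main obstacle is Step 2. Oord's derivation contains an approximation, and its naive version gives $\log K$ instead of $\log(K+1)$. So we will rely on the Poole et al.\ route, whose key inequality is $\mathbb{E}_{p(y)}[\cdot]$ applied to the averaged critic together with symmetry over the $K+1$ indices. We will also normalize by the per-node count, so that the $1/N$ average in \cref{eq: contrastive loss} matches the expectation.
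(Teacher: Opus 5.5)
Your proof is correct, but it takes a different route from the paper's.

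\textbf{The paper's route.} The paper never invokes a Bayes-optimal critic or an NWJ-type bound. It introduces the index $J$ of the positive in $\mathcal{C}$, with $p(J\mid\mathcal{C})=1/(K+1)$, and treats the similarity softmax as a variational classifier $q_\theta$. The only inequality it uses is nonnegativity of $\mathrm{KL}(p(\cdot\mid\mathbf{h}^{\text{fused}},\mathcal{C})\,\|\,q_\theta)$, a Barber--Agakov bound, which gives $I(\mathbf{h}^{\text{fused}};J\mid\mathcal{C})\ge H(J\mid\mathcal{C})-\mathcal{L}_{\text{contr}}=\log(K+1)-\mathcal{L}_{\text{contr}}$. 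It then transfers this to $I(\mathbf{h}^{\text{fused}};\mathbf{h}^*)$ by the data processing inequality along $\mathbf{h}^{\text{fused}}\to\mathbf{h}^*\to(\mathcal{C},J)$.

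\textbf{Your route.} You bound $I(\mathbf{h}^{\text{fused}};\mathbf{h}^*)=I(\mathbf{h}^{\text{fused}};\mathbf{h}_0,\dots,\mathbf{h}_K)$ directly. You use an NWJ-type critic built from the averaged baseline, and the partition term equals $1$ by index symmetry under the product of marginals.

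\textbf{What each buys.} The paper's argument is more elementary and makes the ``pick the positive out of $K+1$'' reading explicit. However, it needs $J$ to be genuinely uniform and independent of the contents. This holds because the positive's marginal law is $p(\mathbf{h}^*)$, the same as the negatives'. It also needs the permutation invariance of the softmax to identify the random-position loss with the implemented one. The paper's own write-up says $J=0$ always in training; taken literally, that gives $H(J\mid\mathcal{C})=0$ and empties the bound. Your route avoids this wrinkle: it needs no auxiliary index and no data-processing step. The i.i.d.-marginal assumption enters in exactly two visible places, the mutual-information identity and the symmetry. Because the bound holds for every positive critic, you can drop your Steps 1--2 (the Bayes-optimal critic and Oord's approximation) entirely. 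Boundedness of cosine similarity also makes all integrability conditions automatic.
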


\begin{proof}See Appendix~\ref{Proof_mutual} for the detailed derivation.
\end{proof}

\subsection{Training Objective}\label{Training Objective}
The total loss of the model is formulated as a weighted linear combination of the supervised loss, the gating alignment loss, and the contrastive loss. These terms directly correspond to the three core optimization objectives: downstream task fitting, geometric consistency constraints, and enhancement of geometric discriminability, respectively: 
\begin{equation}\label{eq: total loss}
    \mathcal{L}_{\text{total}} = \alpha \mathcal{L}_{\text{task}} + \beta \mathcal{L}_{\text{align}} + \gamma \mathcal{L}_{\text{contr}},
\end{equation}

where $\mathcal{L}_{\text{task}}$ denotes the task-specific supervised loss (e.g., cross-entropy), and the non-negative hyperparameters $\alpha, \beta, \gamma$ govern the trade-off between the predictive accuracy and the geometric regularization terms.
We provide the detailed training algorithm in \cref{alg:geomoe} of the Appendix.


\subsection{Complexity and Scalability Analysis}
\label{sec:complexity}

A primary bottleneck in curvature-driven learning is the computational cost. To address this, GeoMoE adopts a strategy that decouples geometric characterization from model optimization. Specifically, ORC computation is treated as an offline pre-processing step. While exact solvers are employed for benchmark datasets, our framework can seamlessly integrate with near linear-time approximations \cite{ni2019community} for scalability. By utilizing neighborhood overlap measures, the pre-computation complexity reduces from $\mathcal{O}(|\mathcal{E}|\Delta^3 \log \Delta)$—where $|\mathcal{E}|$ is the edge count and $\Delta$ is the maximum degree—to a manageable $\mathcal{O}(|\mathcal{E}|\Delta)$, which is computationally efficient for sparse graphs. Crucially, during the online phase, the curvature-guided gating incurs only $\mathcal{O}(1)$ overhead per node, maintaining a training complexity of $\mathcal{O}(|\mathcal{E}|)$ asymptotically equivalent to standard GCNs. This ensures GeoMoE is theoretically scalable to large-scale graphs. Further derivations are detailed in Appendix~\ref{app:complexity}.

\begin{table*}
    \centering
    \caption{Accuracy ($\%\pm$ standard deviation) for Cora, CiteSeer, PubMed, Photo and WikiCS and F1 score ($\%\pm$ standard deviation) for Airport for node classification. Scores are averaged over ten runs. The best results are boldfaced while the runner-ups are underlined.}
    \label{table:main results}
    \begin{tabular}{ccccccc}
    \toprule
    Method & Cora & CiteSeer & PubMed & Photo & WikiCS & Airport \\
    \midrule
    GCN \cite{kipf2016semi} & $82.02 \std{0.45}$ & $70.24 \std{0.85}$ & $78.45 \std{0.36}$ & $92.84 \std{0.25}$ & $74.29 \std{1.06}$ & $83.19 \std{1.76}$ \\
    GAT \cite{velivckovic2017graph} & $82.72 \std{0.69}$ & $71.45 \std{1.15}$ & $77.23 \std{0.51}$ & $92.70 \std{0.55}$ & $76.55 \std{0.88}$ & $86.10 \std{1.39}$ \\
    SGC \cite{wu2019simplifying} & $81.91 \std{0.63}$ & $71.95 \std{0.55}$ & $78.60 \std{0.24}$ & $91.96 \std{1.23}$ & $75.54 \std{1.25}$ & $84.07 \std{1.91}$ \\
    GraphCON \cite{rusch2022graph} & $82.54 \std{1.25}$ & $71.29 \std{0.96}$ & $78.66 \std{1.23}$ & $91.94 \std{1.12}$ & $76.64 \std{0.59}$ & $79.25 \std{2.51}$ \\
    NodeFormer \cite{wu2022nodeformer} & $82.34 \std{0.85}$ & $72.47 \std{1.27}$ & $79.13 \std{1.10}$ & $93.46 \std{0.62}$ & $73.27 \std{0.87}$ & $82.66 \std{0.69}$ \\
    ACMP \cite{wang2022acmp} & $82.65 \std{0.78}$ & $73.01 \std{1.14}$ & $79.12 \std{0.54}$ & $92.24 \std{1.71}$ & $77.12 \std{0.58}$ & $88.96 \std{1.74}$ \\
    SGFormer \cite{wu2023sgformer} & $83.01 \std{0.75}$ & $72.35 \std{0.54}$ & $\underline{79.25} \std{0.65}$ & $90.74 \std{1.51}$ & $74.72 \std{0.76}$ & $91.67 \std{1.21}$ \\
    RNCGLN \cite{zhu2024robust} & $82.58 \std{0.66}$ & $\underline{73.60} \std{0.68}$ & $78.64 \std{0.87}$ & $91.71 \std{0.28}$ & $77.65 \std{0.85}$ & $90.94 \std{0.77}$ \\
    DGMAE \cite{zheng2025discrepancy} & $\underline{84.24} \std{0.58}$ & $72.91 \std{0.38}$ & $79.21 \std{0.28}$ & $93.06 \std{0.13}$ & $\underline{78.81} \std{0.34}$ & $90.58 \std{0.95}$ \\
    \midrule
    HGNN \cite{liu2019hyperbolic} & $78.99 \std{0.61}$ & $70.20 \std{0.61}$ & $76.94 \std{1.12}$ & $91.82 \std{0.35}$ & $74.89 \std{0.42}$ & $84.87 \std{2.09}$ \\
    HGCN \cite{chami2019hyperbolic} & $78.41 \std{0.77}$ & $68.29 \std{0.95}$ & $77.03 \std{0.49}$ & $93.19 \std{0.29}$ & $76.67 \std{0.44}$ & $88.82 \std{1.66}$ \\
    GIL \cite{zhu2020graph} & $82.49 \std{0.77}$ & $71.21 \std{0.93}$ & $77.35 \std{0.56}$ & $93.92 \std{0.18}$ & $77.78 \std{0.85}$ & $91.21 \std{1.72}$ \\
    HGAT \cite{zhang2021hyperbolic} & $79.71 \std{0.95}$ & $69.41 \std{0.78}$ & $75.56 \std{0.67}$ & $93.38 \std{0.45}$ & $77.62 \std{0.49}$ & $89.25 \std{0.96}$ \\
    LGCN \cite{zhang2021lorentzian} & $78.93 \std{0.79}$ & $68.59 \std{0.64}$ & $78.08 \std{0.65}$ & $94.21 \std{0.43}$ & $76.58 \std{0.60}$ & $88.53 \std{1.26}$ \\
    F-HNN \cite{chen2022fully} & $81.04 \std{0.78}$ & $71.12 \std{0.66}$ & $77.66 \std{1.04}$ & $94.55 \std{0.52}$ & $78.07 \std{0.51}$ & $91.85 \std{0.86}$ \\
    MotifRGC \cite{sun2024motif} & $81.65 \std{1.03}$ & $72.56 \std{1.21}$ & $78.33 \std{0.85}$ & $93.78 \std{0.93}$ & $78.12 \std{0.72}$ & $92.31 \std{1.85}$ \\
    LRN \cite{he2025lorentzian} & $78.34 \std{1.16}$ & $67.10 \std{1.19}$ & $77.24 \std{0.67}$ & $\underline{94.58} \std{0.39}$ & $77.98 \std{0.42}$ & $92.39 \std{0.72}$ \\
    GraphMoRE \cite{guo2025graphmore} & $82.85 \std{0.97}$ & $71.64 \std{0.85}$ & $78.65 \std{0.66}$ & $94.07 \std{1.05}$ & $78.27 \std{0.68}$ & $\underline{92.76} \std{0.94}$ \\
    \midrule
    GeoMoE(ours) & $\mathbf{84.75} \std{0.78}$ & $\mathbf{74.72} \std{0.62}$ & $\mathbf{80.13} \std{0.58}$ & $\mathbf{95.18} \std{1.04}$ & $\mathbf{79.65} \std{0.53}$ & $\mathbf{93.56} \std{1.21}$ \\

    \bottomrule
    \end{tabular}
\end{table*}

\section{Experiments}\label{experiments}
\subsection{Experimental Settings}\label{Experimental Settings}

\textbf{Datasets.} 
We evaluate GeoMoE on six benchmark datasets with diverse structural topologies: three citation networks (Cora, CiteSeer, PubMed)\cite{sen2008collective, namata2012query}, a co-purchase graph (Photo)\cite{shchur2018pitfalls}, a reference graph (WikiCS)\cite{mernyei2020wiki}, and a flight network (Airport)\cite{chami2019hyperbolic}.
These datasets collectively span a wide range of graph geometries, with detailed statistics provided in \cref{table:Dataset} of the Appendix~\ref{Experimental Settings appendix}.

\textbf{Baselines.}
We compare our method against a wide range of state-of-the-art baselines, which can be broadly categorized into two groups based on the underlying geometric spaces:
(1) Euclidean-based methods, which model graph representations exclusively in flat Euclidean space;
(2) Riemannian methods, which operate in curved Riemannian manifolds (hyperbolic, spherical, or mixed-curvature spaces) to capture latent hierarchical, cyclic, or heterogeneous structures.

\textbf{Implementation Details.}
We adopt standard splits for three citation networks, official splits for WikiCS, and random splits for Photo (60/20/20\%) and Airport (70/15/15\%). For a fair comparison, all models are optimized via grid search and evaluated over 10 independent runs with a fixed hidden dimension of 16. Our GeoMoE framework incorporates GCN, HGAT, and Spherical-GCN as the Euclidean, hyperbolic, and spherical experts, respectively. All experiments are conducted on an NVIDIA RTX 3090 GPU, with comprehensive implementation details provided in Appendix~\ref{Experimental Settings appendix}.


\begin{table*}[t]
\centering
\caption{Ablation study. In the table, ``\checkmark'' denotes GeoMoE with the component. The best results are boldfaced.}
\label{tab:ablation_optimized}
\begin{tabular}{l ccc cc cccccc}
\toprule
& \multicolumn{3}{c}{\textbf{Experts}} & \multicolumn{2}{c}{\textbf{Losses}} & \multicolumn{6}{c}{\textbf{Datasets}} \\
\cmidrule(lr){2-4} \cmidrule(lr){5-6} \cmidrule(lr){7-12}
Variant & Euc. & Hyp. & Sph. & $\mathcal{L}_{\text{align}}$ & $\mathcal{L}_{\text{contr}}$ & Cora & CiteSeer & PubMed & Photo & WikiCS & Airport \\
\midrule
(a) & \checkmark & & & & & 82.02 & 70.24 & 78.45 & 92.84 & 74.29 & 83.19 \\
(b) & & \checkmark & & & & 79.71 & 69.41 & 75.56 & 93.38 & 77.62 & 89.25 \\
(c) & & & \checkmark & & & 76.68 & 70.57 & 76.85 & 91.61 & 75.85 & 85.63 \\
(d) & \checkmark & \checkmark & & \checkmark & \checkmark & 82.97 & 71.12 & 78.73 & 93.88 & 78.13 & 91.97 \\
(e) & \checkmark & & \checkmark & \checkmark & \checkmark & 82.75 & 71.37 & 78.84 & 93.42 & 75.76 & 87.28 \\
(f) & & \checkmark & \checkmark & \checkmark & \checkmark & 83.47 & 72.89 & 77.19 & 93.76 & 77.85 & 90.31 \\
\midrule
(g) & \checkmark & \checkmark & \checkmark & & & 82.23 & 70.91 & 78.45 & 93.52 & 77.99 & 91.40 \\
(h) & \checkmark & \checkmark & \checkmark & \checkmark & & 83.02 & 72.90 & 78.59 & 93.79 & 78.23 & 91.77 \\
(i) & \checkmark & \checkmark & \checkmark & & \checkmark & 82.77 & 73.84 & 78.81 & 94.25 & 78.45 & 92.19 \\
\midrule
GeoMoE & \checkmark & \checkmark & \checkmark & \checkmark & \checkmark & \textbf{84.75} & \textbf{74.72} & \textbf{80.13} & \textbf{95.18} & \textbf{79.65} & \textbf{93.56} \\
\bottomrule
\end{tabular}
\end{table*}

\subsection{Main Results}\label{Main Results}
\textbf{Bridging the Euclidean-Riemannian gap.}
GeoMoE establishes new state-of-the-art performance by consistently surpassing both Euclidean and Riemannian baselines across all datasets. On citation networks where Euclidean GNNs typically dominate, GeoMoE outperforms the strongest baseline methods by 0.51\%--1.12\%. More significantly, on datasets favoring non-Euclidean geometry, GeoMoE achieves superior results, surpassing the leading mixture model, GraphMoRE, by an impressive average margin of 1.28\%. This performance highlights the efficacy of our design: distinct from GraphMoRE's implicit routing or fixed-manifold approaches, GeoMoE leverages explicit ORC supervision to ensure precise topological alignment, effectively bridging the performance gap between these competing paradigms.

\textbf{Adaptive superiority across geometric regimes.}
GeoMoE demonstrates the strongest gains on graphs with diverse geometric characteristics. On CiteSeer—a citation network exhibiting mixed structural motifs—GeoMoE achieves the largest improvement, outperforming the best baseline (RNCGLN) by $1.12\%$. On WikiCS, which combines hierarchical article links with dense community structures, GeoMoE surpasses the top baseline model (DGMAE) by $0.84\%$. Even on relatively homogeneous graphs such as the hierarchical Airport and the community-dense Photo, it consistently improves over specialized baselines by $0.80\%$ and $0.60\%$, respectively, by leveraging complementary geometric representations. This consistent advantage confirms that adaptive multi-geometry routing optimally represents the varied curvature regimes found in real-world graphs.

\subsection{Ablation Study}

To dissect the contributions of the core components in GeoMoE, we conduct a series of comprehensive ablation experiments by constructing systematic variants that omit specific experts or regularization losses, as summarized in \cref{tab:ablation_optimized}.

\textbf{Synergy of multi-geometry experts.}
The full three-expert GeoMoE consistently outperforms all single- and dual-expert variants. Single-geometry baselines (variants \textit{a}--\textit{c}) achieve competitive but limited performance, with each excelling only on graphs matching its geometric bias (e.g., hyperbolic expert on Airport: 89.25\%, Euclidean on Cora: 82.02\%). Dual-expert combinations (variants \textit{d}--\textit{f}) show partial synergy but remain suboptimal, particularly on datasets with mixed structural motifs like CiteSeer and WikiCS. In contrast, GeoMoE achieves the best results everywhere, confirming that Euclidean, hyperbolic, and spherical geometries provide complementary representational capacities that collectively capture a wider range of graph structures.

\textbf{Efficacy of geometric regularization components.}
The geometric gating alignment loss ($\mathcal{L}_{\text{align}}$) and contrastive loss ($\mathcal{L}_{\text{contr}}$) are critical for optimizing performance. Without either loss (variant g), performance is unstable—strong on Photo but underperforming on Cora ($82.23\%$) and CiteSeer ($70.91\%$). Adding $\mathcal{L}_{\text{align}}$ alone (variant h) boosts CiteSeer accuracy by $2.0\%$ to $72.90\%$ via improved routing alignment. Incorporating $\mathcal{L}_{\text{contr}}$ independently (variant i) enhances discriminative power, increasing PubMed accuracy to $78.81\%$ and Airport F1 to $92.19\%$. 
Mechanistically, $\mathcal{L}_{\text{align}}$ anchors routing to topological structures while $\mathcal{L}_{\text{contr}}$ ensures the global distinctiveness of multi-manifold representations.
GeoMoE, integrating both objectives, achieves optimal performance across all datasets. This synergy confirms that explicit geometric alignment and contrastive refinement are indispensable for fully activating the representational potential of a multi-geometry mixture-of-experts.

\begin{figure}[t!]
  \centering
  \includegraphics[width=0.98\columnwidth]{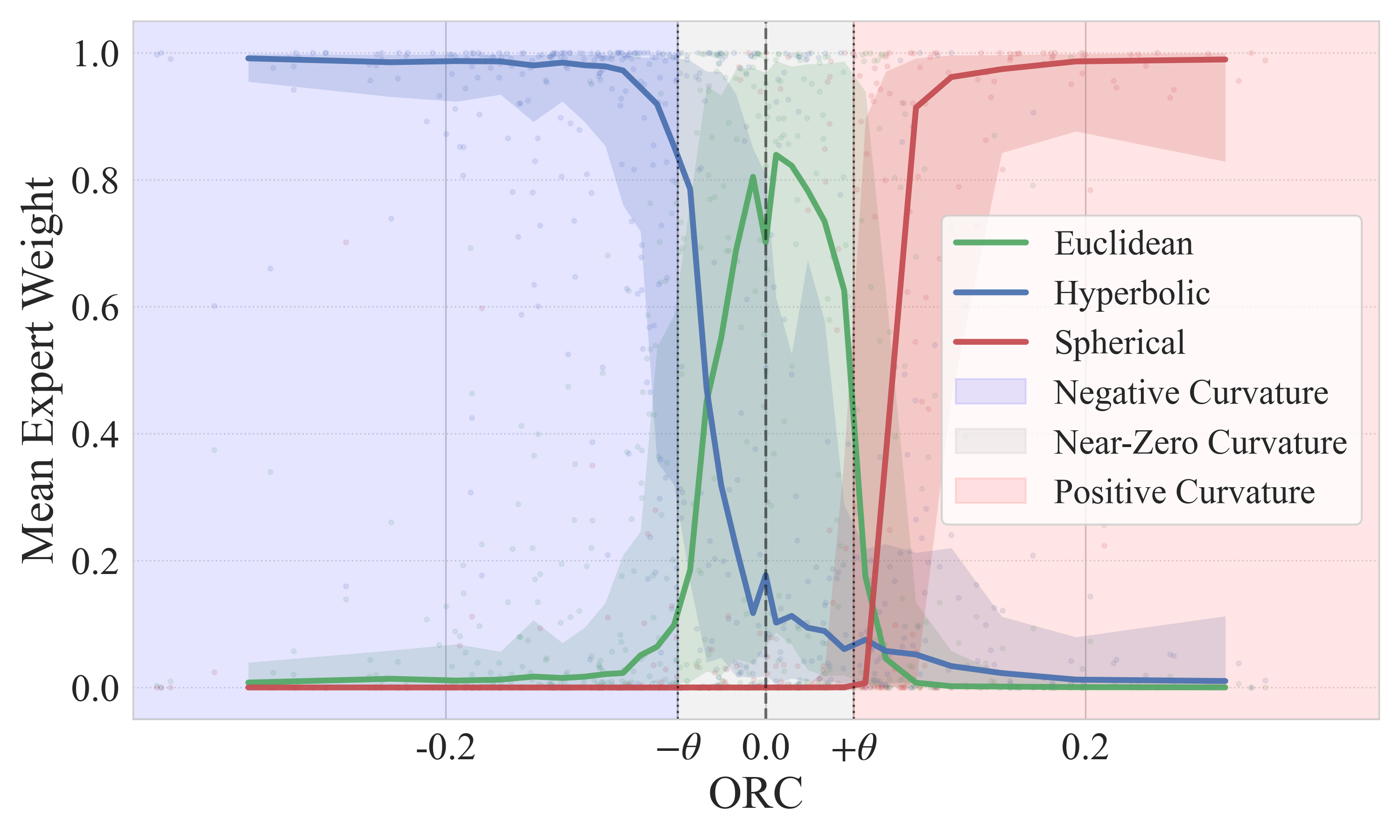}
  \caption{Consistency between gating weights and ORC on Photo.}
  \label{fig:orc_gating_correlation}
\end{figure}

\subsection{Further Analysis}\label{Further Analysis}
To further understand the underlying mechanism of GeoMoE, we first evaluate the routing consistency with geometric priors across diverse curvature regimes, followed by a sensitivity study on the critical curvature threshold $\theta$. 

\textbf{Consistency between gating weights and ORC.}
We assess whether GeoMoE’s routing behavior aligns with geometric priors. As shown in \cref{fig:orc_gating_correlation}, expert utilization on Photo exhibits a clear curvature-aligned pattern: the hyperbolic expert dominates negatively curved regions, whereas the spherical expert prevails as ORC becomes positive. Crucially, the Euclidean weight peaks at $\kappa \approx 0$ and decays with curvature magnitude, confirming its specialization in flat topologies. The sharp expert transitions observed around $\pm\theta$ demonstrate the gate network's discriminative sensitivity, confirming a curvature-aware manifold partitioning rather than a diffuse mixing strategy. These results empirically support that our alignment objective effectively internalizes ORC as a routing signal, consistent with \cref{thm:Monotonic}.


\begin{figure}[t!]
  \centering
  \includegraphics[width=0.95\columnwidth]{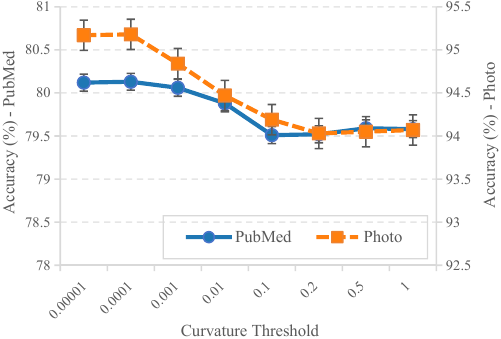}
  \caption{Sensitivity to curvature threshold $\theta$.}
  \label{fig:curvature_threshold}
\end{figure}


\textbf{Sensitivity to curvature threshold $\theta$.}
To validate the robustness of our ORC-guided routing mechanism, we analyze the model's sensitivity to the curvature threshold $\theta$.
As shown in \cref{fig:curvature_threshold}, GeoMoE peaks at $\theta = 10^{-4}$ on both PubMed and Photo, indicating that subtle curvature signals are crucial for routing. Performance remains stable on PubMed as $\theta$ increases, consistent with its Euclidean bias, while accuracy on Photo exhibits a sharper decline beyond $\theta = 10^{-3}$, confirming that forcing Riemannian-inclined nodes into Euclidean space incurs a geometric distortion penalty. 
Essentially, $\theta$ acts as a structural filter that modulates the transition between non-Euclidean and Euclidean representational regimes. 
The consistent performance across several orders of magnitude ($[10^{-5}, 10^{-2}]$) demonstrates that our mechanism is highly robust to threshold selection, with a small default value being effective for diverse graph topologies.

\section{Conclusion}\label{Conclusion}

We present GeoMoE, a novel geometric mixture-of-experts framework that leverages Ollivier–Ricci curvature to enable node-wise adaptive routing across Euclidean, hyperbolic, and spherical spaces. By explicitly aligning gating decisions and contrastive learning with intrinsic graph geometry, GeoMoE ensures both interpretability and discriminability. Theoretical analysis validates the geometric consistency of our design, while extensive experiments demonstrate consistent superiority over strong baselines. This work bridges discrete curvature theory with multi-geometry representation learning, offering a principled path toward more faithful graph embeddings. Future work will explore the extension of curvature-aware routing to complex dynamic networks.

\section*{Impact Statement}
This paper presents GeoMoE, a framework designed to improve the adaptivity and interpretability of graph representation learning through intrinsic geometric priors. As this work focuses on fundamental algorithmic advancements in machine learning, it has broad potential applications in fields such as social network analysis and bioinformatics. There are many potential societal consequences of our work, none of which we feel must be specifically highlighted here.

\nocite{langley00}

\bibliography{example_paper}
\bibliographystyle{icml2026}

\newpage
\clearpage
\appendix

\section{Supplementary Preliminary}\label{Supplementary Preliminary}

\begin{definition}[First-Order Wasserstein Distance]\label{def:wassdist}
Let $\mu_i$ and $\mu_j$ be two probability measures defined over the node set $\mathcal{V}$. The first-order Wasserstein distance $W_1(\mu_i, \mu_j)$ is defined as:
\begin{equation}
    W_1(\mu_i, \mu_j) = \inf_{\pi \in \Pi(\mu_i, \mu_j)} \sum_{v_k \in \mathcal{V}, v_l \in \mathcal{V}} \pi(v_k, v_l) \, d(v_k, v_l),
    \label{eq:wasserstein}
\end{equation}
where $\Pi(\mu_i, \mu_j)$ is the set of all transport plans from $\mu_i$ to $\mu_j$. A transport plan $\pi : \mathcal{V} \times \mathcal{V} \to [0,1]$ satisfies the marginal constraints:
\begin{equation}
    \sum_{v_l \in \mathcal{V}} \pi(v_k, v_l) = \mu_i(v_k)
    \quad \text{and} \quad
    \sum_{v_k \in \mathcal{V}} \pi(v_k, v_l) = \mu_j(v_l).
    \label{eq:marginal}
\end{equation}
\end{definition}

\begin{definition}[Local Probability Measure]\label{Local Probability Measure}
Given a graph $\mathcal{G} = (\mathcal{V}, \mathcal{E})$, for a node $v_i \in \mathcal{V}$, let $d_{v_i}$ denote its degree and $\mathcal{N}(v_i)$ its neighborhood set. A parameter $p \in [0, 1]$ controls the weight assigned to the center node itself. The local probability measure $\mu_{v_i}$ over $\mathcal{V}$ is defined as:
\begin{equation}
    \mu_{v_i}(v) =
    \begin{cases}
        p, & \text{if } v = v_i, \\
        \frac{1-p}{d_{v_i}}, & \text{if } v \in \mathcal{N}(v_i), \\
        0, & \text{otherwise}.
    \end{cases}
    \label{eq:local_measure}
\end{equation}
\end{definition}

\section{Theoretical Analysis}\label{Theoretical Analysis}

\subsection{Proof of \cref{thm:synergy}}\label{Proof_synergy}


\begin{proof}
Let $I^*(v_i) = \arg\min_{m \in \{E,H,S\}} \ell^m(v_i)$ be the oracle expert selector \cite{jordan1994hierarchical}, and define:
\begin{equation}
   \mathbf{h}^*(v_i) = \mathbf{h}^{I^*(v_i)}(v_i), 
\end{equation}
\begin{equation}
   \ell^*(v_i) = \ell(\mathbf{h}^*(v_i), y(v_i)) = \min_m \ell^m(v_i). 
\end{equation}
By definition, for any expert $m$, we have:
\begin{equation}
    \mathbb{E}[\ell^*] \leq \mathbb{E}[\ell^m].
\end{equation}
Furthermore, by the advantage region assumption, there exists $\eta_0 > 0$ such that:
\begin{equation}\label{eq:step1-oracle-bound}
\mathbb{E}[\ell^*] < \min_m \mathbb{E}[\ell^m] - \eta_0.
\end{equation}
Assume the loss function $\ell(\cdot, y)$ is $L$-Lipschitz in its first argument \cite{bubeck2015convex}:
\begin{equation}
\begin{split}
    \left|\ell(\mathbf{h}^{\text{fused}}(v_i),y(v_i)) - \ell(\mathbf{h}^*(v_i),y(v_i))\right| \\
\le L\cdot\|\mathbf{h}^{\text{fused}}(v_i)-\mathbf{h}^*(v_i)\|, 
\end{split}
\end{equation}
where $L$ is Lipschitz constant of the loss function $\ell$. Then:
\begin{equation}
    \ell(\mathbf{h}^{\text{fused}}(v_i), y(v_i)) \leq \ell^*(v_i) + L \cdot \| \mathbf{h}^{\text{fused}}(v_i) - \mathbf{h}^*(v_i) \|.
\end{equation}
Taking expectations yields:
\begin{equation}\label{eq:step2-lipschitz-bound}
\mathbb{E}[\ell(\mathbf{h}^{\text{fused}}, y)] \leq \mathbb{E}[\ell^*] + L \cdot \mathbb{E} \| \mathbf{h}^{\text{fused}} - \mathbf{h}^* \| ].
\end{equation}


Recall:
\begin{equation}
    \mathbf{h}^{\text{fused}}(v_i) = \sum_m \omega_i^m \mathbf{h}^m(v_i).
\end{equation}
Thus:
\begin{equation}
\begin{split}
    &\| \mathbf{h}^{\text{fused}}(v_i) - \mathbf{h}^*(v_i) \| \\
    &= \left\| \sum_m \omega_i^m (\mathbf{h}^m(v_i) - \mathbf{h}^*(v_i)) \right\| \\
    &\leq \sum_m \omega_i^m \| \mathbf{h}^m(v_i) - \mathbf{h}^*(v_i) \|.
\end{split}
\end{equation}
Since all embeddings are projected into a common Euclidean tangent space via logarithmic maps and the hyperbolic and spherical spaces are compact, there exists a finite constant $M < \infty$ such that:
\[
\| \mathbf{h}^m(v_i) - \mathbf{h}^n(v_i) \| \leq M \quad \text{for all } m,n \in \{E,H,S\}.
\]
When $m = I^*(v_i)$, we have $\mathbf{h}^m(v_i) = \mathbf{h}^*(v_i)$, so that term vanishes. Therefore:
\begin{equation}
\begin{split}
    & \sum_m \omega_i^m \| \mathbf{h}^m(v_i) - \mathbf{h}^*(v_i) \| \\
    & = \sum_{m\neq I^*(v_i)} \omega_i^m \| \mathbf{h}^m(v_i) - \mathbf{h}^*(v_i) \|  \\
    & \leq M \sum_{m \neq I^*(v_i)} \omega_i^m.
\end{split}
\end{equation}
Note that:
\begin{equation}
\begin{split}
    \| \boldsymbol{\omega}_i - e^{I^*(v_i)} \|_1 = 2\big(1-\omega_i^{I^*(v_i)}\big) = 2 \sum_{m \neq I^*(v_i)} \omega_i^m.
\end{split}
\end{equation}
so
\begin{equation}
    \sum_{m \neq I^*(v_i)} \omega_i^m
= \frac{1}{2}\| \boldsymbol{\omega}_i - e^{I^*(v_i)} \|_1.
\end{equation}
Therefore,
\begin{equation}
    \sum_m \omega_i^m \| \mathbf{h}^m(v_i) - \mathbf{h}^*(v_i) \|
\le \frac{M}{2}\| \boldsymbol{\omega}_i - e^{I^*(v_i)} \|_1,
\end{equation}
and thus
\begin{equation}
\| \mathbf{h}^{\text{fused}}(v_i) - \mathbf{h}^*(v_i) \| \leq \frac{M}{2} \| \boldsymbol{\omega}_i - e^{I^*(v_i)} \|_1.
\end{equation}
Taking expectations:
\begin{equation}\label{eq:step3-distance-bound}
\mathbb{E}[ \| \mathbf{h}^{\text{fused}} - \mathbf{h}^* \| ] 
\leq \frac{M}{2} \mathbb{E}[ \| \boldsymbol{\omega} - e^{I^*} \|_1 ].
\end{equation}


By assumption, on each advantage region $\Omega_m$,
\begin{equation}
\mathbb{E}[ \| \boldsymbol{\omega} - e^m \|_1 \mid v_i \in \Omega_m ] \leq \delta.
\end{equation}
We now bound $\mathbb{E}[ \| \boldsymbol{\omega} - e^{I^*} \|_1 ]$ using the law of total expectation. Partition the node set $\mathcal{V}$ into the advantage regions $\{\Omega_m\}_{m \in \{E,H,S\}}$ and their complement:
\begin{equation}
\begin{aligned}
& \mathbb{E}[ \| \boldsymbol{\omega} - e^{I^*} \|_1 ] \\
& = \sum_{m \in \{E,H,S\}} \mu(\Omega_m) \,
    \mathbb{E}[ \| \boldsymbol{\omega} - e^{I^*} \|_1 \mid v_i \in \Omega_m ] \\
&\quad + \mu\big( \mathcal{V} \setminus \bigcup_{m \in \{E,H,S\}} \Omega_m \big) \,
   \mathbb{E}[ \| \boldsymbol{\omega} - e^{I^*} \|_1 \mid v_i \notin \bigcup_m \Omega_m ],
\end{aligned}
\end{equation}
where $\mu$ denotes the uniform probability measure over $\mathcal{V}$.
On each advantage region $\Omega_m$, we have $I^* = m$ by definition, so the first term becomes:
\begin{equation}
\begin{aligned}
& \sum_{m \in \{E,H,S\}} \mu(\Omega_m) \,
\mathbb{E}[ \| \boldsymbol{\omega} - e^m \|_1 \mid v \in \Omega_m ] \\
& \leq \sum_{m \in \{E,H,S\}} \mu(\Omega_m) \cdot \delta
= \delta \cdot \sum_{m \in \{E,H,S\}} \mu(\Omega_m)
\leq \delta.
\end{aligned}
\end{equation}
For the complement region, note that the $L^1$ distance between two probability vectors is bounded by 2. Therefore:
\begin{equation}
\mathbb{E}[ \| \boldsymbol{\omega} - e^{I^*} \|_1 \mid v \notin \bigcup_m \Omega_m ] \leq 2.
\end{equation}
Combining these bounds, we obtain:
\begin{equation}\label{eq:step4-expectation-bound}
\mathbb{E}[ \| \boldsymbol{\omega} - e^{I^*} \|_1 ]
\leq \delta + 2 \cdot \mu\big( \mathcal{V} \setminus \bigcup_{m \in \{E,H,S\}} \Omega_m \big).
\end{equation}


Combining \cref{eq:step2-lipschitz-bound}, \cref{eq:step3-distance-bound} and \cref{eq:step4-expectation-bound}, we have
\begin{equation}
\begin{aligned}
 \mathbb{E}[\ell(\mathbf{h}^{\text{fused}}, y)] 
 \leq \mathbb{E}[\ell^*] + \frac{LM}{2} \left( \delta + 2 \cdot \mu\left( \mathcal{V} \setminus \bigcup_m \Omega_m \right) \right).
\end{aligned}
\end{equation}
If the gating error $\delta$ is sufficiently small and the uncovered region $V\setminus\cup_{m}V_m$ has sufficiently small measure so that
\[
\frac{LM}{2} \left( \delta + 2 \cdot \mu\left( \mathcal{V} \setminus \bigcup_m \Omega_m \right) \right) < \eta_0,
\]
then from \cref{eq:step1-oracle-bound} we obtain the strict inequality
\[
\mathbb{E}[\ell(\mathbf{h}^{\text{fused}}, y)] < \min_{m \in \{E,H,S\}} \mathbb{E}[\ell^m].
\]

\end{proof}

\subsection{Proof of \cref{thm:Monotonic}}\label{Proof_monotonic}

\begin{proof}

The KL divergence is defined as:
\begin{equation}
\mathrm{KL}(\mathbf{\omega}^* \| \mathbf{\omega}) = \sum_{m \in \{E, H, S\}} \omega^{*m} \log \frac{\omega^{*m}}{\omega^{m}}.
\end{equation}
During minimization of this loss, the simplex constraint $\mathbf{\omega} \in \Delta^2 = \{ (\omega^E, \omega^H, \omega^S) \mid \sum_{m \in \{E, H, S\}} \omega^m = 1, \omega^m \ge 0 \}$ must be strictly satisfied. Computing the gradient of the KL divergence with respect to $w_m$ without accounting for the simplex constraint can lead to updates that violate $\sum_{m \in \{E, H, S\}} \omega^m = 1$.

To enforce this constraint, the Lagrange multiplier method \cite{bertsekas2014constrained} is employed. The Lagrangian is constructed as:
\begin{equation}
\mathcal{L}(\mathbf{\omega}, \lambda) = \mathrm{KL}(\mathbf{\omega}^* \| \mathbf{\omega}) + \lambda \left( \sum_{m \in \{E, H, S\}} \omega^m - 1 \right).
\end{equation}
Taking the partial derivative with respect to $\omega^m$ and setting it to zero yields the optimality condition:
\begin{equation}
\frac{\partial \mathcal{L}}{\partial \omega^m} = -\frac{\omega^{*m}}{\omega^m} + \lambda = 0 \quad \Rightarrow \quad \omega^m = \frac{\omega^{*m}}{\lambda}.
\end{equation}
Applying the simplex constraint $\sum_{m \in \{E, H, S\}} \omega^m = 1$ gives $\lambda = 1$, and thus the unique global minimizer is $\mathbf{\omega} = \mathbf{\omega}^*$.


Without loss of generality, we focus our analysis on the positive curvature region $\Omega_+ = \{ v \in \mathcal{V} \mid \kappa(v) \geq \theta \}$. 

Define:
\begin{equation}
\begin{aligned}
& a(v) = \sigma\left( \frac{ -|\kappa(v)| + \theta }{ \eta } \right), \\
& b(v) = \sigma\left( \frac{ -\kappa(v) - \theta }{ \eta } \right), \\
& c(v) = \sigma\left( \frac{ \kappa(v) - \theta }{ \eta } \right).
\end{aligned}
\end{equation}
Then $\omega^{*S} = c(v) / (a(v) + b(v) + c(v))$. By the quotient rule:

\begin{equation}
\frac{\partial \omega^{*S}}{\partial \kappa(v)} 
= \frac{ 
    \begin{aligned}
    &\frac{\partial c(v)}{\partial \kappa(v)} (a(v) + b(v) + c(v)) \\
    &- c(v) \left( \frac{\partial a(v)}{\partial \kappa(v)} + \frac{\partial b(v)}{\partial \kappa(v)} + \frac{\partial c(v)}{\partial \kappa(v)} \right)
    \end{aligned}
  }{(a(v) + b(v) + c(v))^2}.
\end{equation}


For any $v \in \Omega_+$ (i.e., $\kappa(v) \geq \theta > 0$), the signs of the partial derivatives are:
\begin{equation}
\begin{cases}
\dfrac{\partial a(v)}{\partial \kappa(v)} = -\dfrac{1}{\eta} \sigma'\left( \dfrac{ -\kappa(v) + \theta }{ \eta } \right) < 0, \\
\dfrac{\partial b(v)}{\partial \kappa(v)} = -\dfrac{1}{\eta} \sigma'\left( \dfrac{ -\kappa(v) - \theta }{ \eta } \right) < 0, \\
\dfrac{\partial c(v)}{\partial \kappa(v)} = \dfrac{1}{\eta} \sigma'\left( \dfrac{ \kappa(v) - \theta }{ \eta } \right) > 0.
\end{cases}
\end{equation}

The numerator of the derivative can be rewritten as:
\begin{equation}
N = \underbrace{\frac{\partial c(v)}{\partial \kappa(v)} (a(v) + b(v)) - c(v) \left( \frac{\partial a(v)}{\partial \kappa(v)} + \frac{\partial b(v)}{\partial \kappa(v)} \right)}_{>0}.
\end{equation}
Since $\frac{\partial a(v)}{\partial \kappa(v)} + \frac{\partial b(v)}{\partial \kappa(v)} < 0$, the second term is positive. The first term is obviously also positive. It follows immediately that $N > 0$. The denominator is strictly positive, so $\frac{\partial \omega^{*S}}{\partial \kappa(v)} > 0$ for all $v \in \Omega_+$.


Let $f(\kappa) = \frac{\partial \omega^{*S}}{\partial \kappa}$ be the derivative function defined on the domain of curvature. Given that the graph is finite, the curvature values $\kappa(v)$ are naturally bounded. We thus consider $f(\kappa)$ on the compact set $[\theta, M]$, where $M = \max_{v \in \mathcal{V}} \kappa(v)$ represents the maximum observed curvature. 

Since $f(\kappa)$ is continuous and strictly positive ($f(\kappa) > 0$) for all $\kappa \in [\theta, M]$, the Extreme Value Theorem guarantees that it attains a positive minimum:
\begin{equation}
C_+ = \min_{\kappa \in [\theta, M]} f(\kappa) > 0.
\end{equation}
Consequently, we have $\frac{\partial \omega^{*S}}{\partial \kappa(v)} \geq C_+ > 0$ for all $v \in \Omega_+$.

At the convergence point, $\mathbf{\omega} = \mathbf{\omega}^*$. Since the gating network is a differentiable function, it follows that:
\[
\frac{\partial \omega^{S}}{\partial \kappa(v)} = \frac{\partial \omega^{*S}}{\partial \kappa(v)} \geq C_+ > 0.
\]
The proofs for the negative curvature region $\Omega_- = \{ v \in \mathcal{V} \mid \kappa(v) \leq -\theta \}$ and the near-zero curvature region $\Omega_0 = \{ v \in \mathcal{V} \mid |\kappa(v)| \leq \theta \}$ are analogous, yielding constants $C_- > 0$ and $C_0 > 0$, respectively.
\end{proof}

\subsection{Proof of \cref{thm:Mutual_information}}\label{Proof_mutual}

\begin{proof}
We establish the mutual information lower bound for the ORC-guided contrastive learning objective. The proof follows the standard InfoNCE variational framework \cite{oord2018representation, poole2019variational} while explicitly incorporating our curvature-driven construction.


Let $v$ be a node uniformly sampled from $\mathcal{V}$. Define:
\begin{itemize}
    \item $\kappa = \kappa(v)$: the node-level ORC,
    \item $\mathbf{h}^{\text{fused}} = \mathbf{h}^{\text{fused}}(v)$: the fused representation,
    \item $\mathbf{h}^* = \mathbf{h}^*(\kappa)$: the ORC-guided geometric prior embedding (positive sample).
\end{itemize}
Let $\mathbf{h}^-_1, \dots, \mathbf{h}^-_K \stackrel{\text{i.i.d.}}{\sim} p(\mathbf{h}^*)$ be $K$ negative samples independently drawn from the marginal distribution of positive embeddings.


Form the candidate set $\mathcal{C} = \{\mathbf{h}^*, \mathbf{h}^-_1, \dots, \mathbf{h}^-_K\}$ of size $K+1$. Note that the subset of negatives $\{\mathbf{h}^-_1, \dots, \mathbf{h}^-_K\}$ corresponds to the set $\mathcal{N}^{-}(v)$ defined in the main text. Introduce a discrete index random variable $J \in \{0,1,\dots,K\}$ indicating which candidate is the true positive; $J=0$ corresponds to $\mathbf{h}^*$. Under the data generation process, we assume a uniform prior:
\begin{equation}\label{Proof_mutual_step2_1}
p(J = j \mid \mathcal{C}) = \frac{1}{K+1}, \quad j = 0,\dots,K.
\end{equation}
Define the model's predictive distribution via a softmax over similarity scores:
\begin{equation}\label{Proof_mutual_step2_2}
q_\theta(j \mid \mathbf{h}^{\text{fused}}, \mathcal{C}) 
= \frac{\exp\big(s(\mathbf{h}^{\text{fused}}, c_j)/\tau_c\big)}
{\sum_{\ell=0}^{K} \exp\big(s(\mathbf{h}^{\text{fused}}, c_\ell)/\tau_c\big)},
\end{equation}
where $c_0 = \mathbf{h}^*$, $c_j = \mathbf{h}^-_j$ for $j \geq 1$, $s(u,v) = u^\top v/(\|u\|\|v\|)$ is cosine similarity, and $\tau_c$ is the temperature parameter.

In training, we construct the candidate set such that the true positive always occupies the position $j=0$, i.e., $J=0$ in the data. 
The InfoNCE loss can then be written as the expected negative log-likelihood of the true index:
\begin{equation}\label{Proof_mutual_step2_3}
\mathcal{L}_{\text{contr}} 
= -\mathbb{E}_{(\mathbf{h}^{\text{fused}}, \mathcal{C})}\big[ \log q_\theta(J=0 \mid \mathbf{h}^{\text{fused}}, \mathcal{C}) \big]. 
\end{equation}


Consider the conditional mutual information between $\mathbf{h}^{\text{fused}}$ and $J$ given $\mathcal{C}$:
\begin{equation}\label{Proof_mutual_step3_1}
I(\mathbf{h}^{\text{fused}}; J \mid \mathcal{C})
=
\mathbb{E}_{p(\mathbf{h}^{\text{fused}}, J, \mathcal{C})}
\Big[
\log\frac{p(J \mid \mathbf{h}^{\text{fused}}, \mathcal{C})}{p(J \mid \mathcal{C})}
\Big]. 
\end{equation}

We now introduce the variational distribution $q_\theta(J \mid \mathbf{h}^{\text{fused}}, \mathcal{C})$, then add and subtract $\log q_\theta(J \mid \mathbf{h}^{\text{fused}}, \mathcal{C})$ inside the expectation:
\begin{equation}\label{Proof_mutual_step3_2}
\begin{aligned}
& I(\mathbf{h}^{\text{fused}}; J \mid \mathcal{C}) \\
&=
\mathbb{E}
\Big[
\log\frac{p(J \mid \mathbf{h}^{\text{fused}}, \mathcal{C})}{p(J \mid \mathcal{C})}
\Big]\\
&=
\mathbb{E}
\Big[
\log\frac{p(J \mid \mathbf{h}^{\text{fused}}, \mathcal{C})}{q_\theta(J \mid \mathbf{h}^{\text{fused}}, \mathcal{C})}
+
\log\frac{q_\theta(J \mid \mathbf{h}^{\text{fused}}, \mathcal{C})}{p(J \mid \mathcal{C})}
\Big]\\
&=
\underbrace{
\mathbb{E}
\Big[
\log\frac{q_\theta(J \mid \mathbf{h}^{\text{fused}}, \mathcal{C})}{p(J \mid \mathcal{C})}
\Big]}_{\text{variational term}}
+
\underbrace{
\mathbb{E}
\Big[
\log\frac{p(J \mid \mathbf{h}^{\text{fused}}, \mathcal{C})}{q_\theta(J \mid \mathbf{h}^{\text{fused}}, \mathcal{C})}
\Big]}_{\text{KL term}}.
\end{aligned}
\end{equation}

According to the definition of conditional expectation and the law of total expectation,  
the second term can be rewritten as a two-layer expectation over $(\mathbf{h}^{\text{fused}}, \mathcal{C})$ and $J$:

\begin{equation}\label{Proof_mutual_step3_3}
\begin{aligned}
&\mathbb{E} \left[ \log \frac{p(J \mid \mathbf{h}^{\text{fused}}, \mathcal{C})}{q_\theta(J \mid \mathbf{h}^{\text{fused}}, \mathcal{C})} \right] \\
&= \mathbb{E}_{\mathbf{h}^{\text{fused}}, \mathcal{C}} \left[ \mathbb{E}_{J \sim p(\cdot \mid \mathbf{h}^{\text{fused}}, \mathcal{C})} \left[ \log \frac{p(J \mid \mathbf{h}^{\text{fused}}, \mathcal{C})}{q_\theta(J \mid \mathbf{h}^{\text{fused}}, \mathcal{C})} \right] \right]. 
\end{aligned}
\end{equation}

The inner expectation is the KL divergence:
\begin{equation}\label{Proof_mutual_step3_4}
\begin{aligned}
& \mathbb{E}_{J \sim p(\cdot \mid \mathbf{h}^{\text{fused}}, \mathcal{C})}
[
\log \tfrac{p(J \mid \mathbf{h}^{\text{fused}}, \mathcal{C})}{q_\theta(J \mid \mathbf{h}^{\text{fused}}, \mathcal{C})}
] \\
& =
\mathrm{KL}\left(p(\cdot \mid \mathbf{h}^{\text{fused}}, \mathcal{C}) \,\|\, q_\theta(\cdot \mid \mathbf{h}^{\text{fused}}, \mathcal{C})\right).
\end{aligned}
\end{equation}

Hence:

\begin{equation}\label{Proof_mutual_step3_5}
\begin{aligned}
& \mathbb{E}
\Big[
\log\frac{p(J \mid \mathbf{h}^{\text{fused}}, \mathcal{C})}{q_\theta(J \mid \mathbf{h}^{\text{fused}}, \mathcal{C})}
\Big] \\
& =
\mathbb{E}_{\mathbf{h}^{\text{fused}}, \mathcal{C}}
\Big[
\mathrm{KL}\big(p(\cdot \mid \mathbf{h}^{\text{fused}}, \mathcal{C}) \,\|\, q_\theta(\cdot \mid \mathbf{h}^{\text{fused}}, \mathcal{C})\big)
\Big]
\;\ge\; 0,
\end{aligned}
\end{equation}

which yields the inequality
\begin{equation}\label{Proof_mutual_step3_6}
I(\mathbf{h}^{\text{fused}}; J \mid \mathcal{C})
\;\ge\;
\mathbb{E}
\Big[
\log\frac{q_\theta(J \mid \mathbf{h}^{\text{fused}}, \mathcal{C})}{p(J \mid \mathcal{C})}
\Big]. 
\end{equation}

Next, we expand the right-hand side:
\begin{equation}\label{Proof_mutual_step3_7}
\begin{aligned}
& \mathbb{E}
\Big[
\log\frac{q_\theta(J \mid \mathbf{h}^{\text{fused}}, \mathcal{C})}{p(J \mid \mathcal{C})}
\Big] \\
&=
\mathbb{E}
\Big[
\log q_\theta(J \mid \mathbf{h}^{\text{fused}}, \mathcal{C})
-
\log p(J \mid \mathcal{C})
\Big]\\
&=
\mathbb{E}\big[\log q_\theta(J \mid \mathbf{h}^{\text{fused}}, \mathcal{C})\big]
-
\mathbb{E}\big[\log p(J \mid \mathcal{C})\big].
\end{aligned}
\end{equation}

Using the uniform prior assumption in \cref{Proof_mutual_step2_1}, we have
\[
p(J \mid \mathcal{C}) = \frac{1}{K+1}
\quad\Rightarrow\quad
\log p(J \mid \mathcal{C}) = -\log(K+1),
\]
so that
\begin{equation}\label{Proof_mutual_step3_8}
-\mathbb{E}\big[\log p(J \mid \mathcal{C})\big] = \log(K+1). 
\end{equation}

Substituting \cref{Proof_mutual_step3_8} into \cref{Proof_mutual_step3_7} and then into \cref{Proof_mutual_step3_6} yields
\begin{equation}\label{Proof_mutual_step3_9}
\begin{aligned}
I(\mathbf{h}^{\text{fused}}; J \mid \mathcal{C})
&\ge
\mathbb{E}\big[\log q_\theta(J \mid \mathbf{h}^{\text{fused}}, \mathcal{C})\big]
+\log(K+1).
\end{aligned}
\end{equation}

On the other hand, from the definition of $\mathcal{L}_{\text{contr}}$ in \cref{Proof_mutual_step2_3}, we have
\begin{equation}\label{Proof_mutual_step3_10}
\begin{aligned}
\mathbb{E}
\big[\log q_\theta(J \mid \mathbf{h}^{\text{fused}}, \mathcal{C})\big]
= -\mathcal{L}_{\text{contr}}. 
\end{aligned}
\end{equation}

Combining \cref{Proof_mutual_step3_9} and \cref{Proof_mutual_step3_10}, we obtain
\begin{equation}\label{Proof_mutual_step3_11}
I(\mathbf{h}^{\text{fused}}; J \mid \mathcal{C})
\;\ge\;
\log(K+1) - \mathcal{L}_{\text{contr}}. 
\end{equation}


Finally, we connect the index $J$ to the curvature-driven geometric prior $\mathbf{h}^*(\kappa)$. Recall that the positive sample is defined as $\mathbf{h}^* = \mathbf{h}^*(\kappa)$, and the candidate set is $\mathcal{C} = \{\mathbf{h}^*, \mathbf{h}^-_1, \dots, \mathbf{h}^-_K\}$, where the negatives $\mathbf{h}^-_k$ are drawn i.i.d. from $p(\mathbf{h}^*)$. By assumption, these negatives are sampled independently of $\mathbf{h}^{\text{fused}}$, so $\mathcal{C}$ introduces no additional dependence between $\mathbf{h}^{\text{fused}}$ and $J$. Furthermore, given $\mathcal{C}$, the index $J$ is a deterministic function of $\mathbf{h}^*(\kappa)$, as it uniquely identifies the position of the positive sample within the set.

This yields a Markov structure
\[
\mathbf{h}^{\text{fused}} \;\longrightarrow\; \mathbf{h}^*(\kappa) \;\longrightarrow\; (\mathcal{C}, J),
\]
and by the data processing inequality~\cite{cover1999elements}, we have
\begin{equation}\label{Proof_mutual_step4_4}
I(\mathbf{h}^{\text{fused}}; \mathbf{h}^*(\kappa))
\;\ge\;
I(\mathbf{h}^{\text{fused}}; J \mid \mathcal{C}).
\end{equation}

Combining \cref{Proof_mutual_step3_11} and \cref{Proof_mutual_step4_4}, 
\begin{equation}
I\big(\mathbf{h}^{\text{fused}}; \mathbf{h}^*\big)
\;\ge\;
\log(K+1) - \mathcal{L}_{\text{contr}},
\end{equation}
which is exactly the claimed mutual information lower bound.
\end{proof}

\noindent{\textbf{Remark on Negative Sampling Assumptions:}}
The derivation of Theorem \ref{thm:Mutual_information} follows the standard assumption that negative samples are drawn i.i.d. from the marginal distribution $p(\mathbf{h}^*)$, ensuring that the candidate set $\mathcal{C}$ is conditionally independent of the anchor $\mathbf{h}^{\text{fused}}$. In our practical implementation, however, we employ a ``hard negative'' selection strategy (corresponding to $\mathcal{N}^{-}(v)$) based on curvature similarity to $\mathbf{h}^{\text{fused}}$. While this introduces a weak dependency between $\mathcal{C}$ and $\mathbf{h}^{\text{fused}}$ and deviates from the strict i.i.d. assumption, the theoretical bound remains a robust approximation for two reasons:
\begin{itemize}
    \item[(i)] The InfoNCE objective continues to provide a valid lower bound on mutual information when the dependency is weak or bounded, as discussed in recent analyses of contrastive learning~\cite{wang2020understanding}.
    \item[(ii)] Empirically, curvature-aware hard negatives enhance the discriminative power of the representations, thereby improving the empirical tightness of the bound under the same geometric consistency constraint.
\end{itemize}
Thus, while our practical sampling strategy modifies the idealized assumption, it preserves the underlying principle of mutual information maximization while strengthening the practical efficacy of representation learning.

\section{Algorithm}\label{Algorithm}

To provide a clear roadmap for our implementation, we delineate the training procedure of GeoMoE in Algorithm \ref{alg:geomoe}. The process begins with the offline precomputation of Ollivier-Ricci curvature (ORC), which serves as a static topological prior for geometric routing. During each training iteration, the model concurrently generates node representations among multiple manifold experts and dynamically determines the gating weights. The optimization follows a multi-objective strategy, where geometric consistency is enforced via $\mathcal{L}_{\text{align}}$ and $\mathcal{L}_{\text{contr}}$ alongside the loss of primary tasks $\mathcal{L}_{\text{task}}$. This structured approach ensures that the model parameters $\Theta$ are refined to align with the underlying graph topology effectively.

\begin{algorithm}[tb]
\caption{Training Process of GeoMoE}
\label{alg:geomoe}
\begin{algorithmic}[1]
\renewcommand{\algorithmicrequire}{\textbf{Input:}}
\renewcommand{\algorithmicensure}{\textbf{Output:}}

\REQUIRE Graph $\mathcal{G} = (\mathcal{V}, \mathcal{E})$, node features $\mathbf{X}$, adjacency matrix $\mathbf{A}$, labels $\mathbf{Y}$, ORC threshold $\theta$, smoothing factor $\eta$, loss weights $\alpha, \beta, \gamma$, max epochs $T$
\ENSURE Model parameters $\Theta$ and fused node embeddings $\{\mathbf{h}_{v_i}^{\text{fused}}\}_{v_i \in \mathcal{V}}$

\STATE Precompute Ollivier-Ricci curvature $k(v_i)$ for all $v_i \in \mathcal{V}$ and initialize $\Theta$ randomly
\FOR{$t = 1$ \textbf{to} $T$}
    \STATE Compute expert embeddings $\mathbf{h}^E, \mathbf{h}^H, \mathbf{h}^S$ via \cref{eq: GNN network};
    \STATE Calculate gating weights $\mathbf{w}$ via \cref{eq: gating weight};
    \STATE Obtain fused representations $\mathbf{h}^{\text{fused}}$ via \cref{eq: fused embedding};
    \STATE Compute gating alignment loss $\mathcal{L}_{\text{align}}$ via \cref{eq: alignment loss};
    \STATE Construct ORC-guided contrastive pairs and compute $\mathcal{L}_{\text{contr}}$ via \cref{eq: contrastive loss};
    \STATE Evaluate supervised loss $\mathcal{L}_{\text{task}}$ and formulate total loss $\mathcal{L}_{\text{total}}$ via \cref{eq: total loss};
    \STATE Update $\Theta$ via gradient descent.
\ENDFOR
\end{algorithmic}
\end{algorithm}

\section{Complexity and Scalability Analysis}\label{app:complexity}

In this section, we provide a detailed analysis of the computational complexity of GeoMoE. We distinguish between the \textit{offline geometric pre-computation} and the \textit{online model training}, demonstrating that our framework is efficient and scalable to large graph datasets.

\subsection{Offline Geometric Pre-computation}

The core geometric signal in our framework is the Ollivier-Ricci Curvature (ORC). The bottleneck lies in computing the 1-Wasserstein distance $W_1(\mu_u, \mu_v)$ for each edge $(u,v) \in \mathcal{E}$.

\paragraph{Exact Computation.} 
Exact computation of $W_1$ involves solving a Linear Programming (LP) problem. For a graph with maximum degree $\Delta$, the transport plan has size roughly $\Delta \times \Delta$. Using standard interior-point methods, the worst-case complexity per edge is $\mathcal{O}(\Delta^3 \log \Delta)$. For the entire graph, the complexity is:
\begin{equation}
    \mathcal{T}_{\text{exact}} = \mathcal{O}(|\mathcal{E}| \cdot \Delta^3 \log \Delta).
\end{equation}
While feasible for the datasets used in our main experiments (e.g., PubMed, roughly 20k nodes), this becomes prohibitive for million-scale graphs.

\paragraph{Scalable Approximations.} 
To ensure scalability, GeoMoE supports multiple approximation strategies: 

\begin{itemize}
    \item \textbf{Sinkhorn Algorithm \cite{cuturi2013sinkhorn}:} By regularizing the optimal transport problem, the Sinkhorn algorithm converts the LP into iterative matrix scaling operations. The complexity per edge is $\mathcal{O}(L \cdot \Delta^2)$, where $L$ is the number of iterations (typically small, e.g., $10-20$). This offers a significant speedup over exact LP.

    \item \textbf{Local Clustering Coefficient Bound \cite{jost2014ollivier}:}
    Jost and Liu derived a theoretical lower bound for ORC based on local clustering coefficients. For an edge $(u,v)$, the curvature is bounded by geometric properties related to the number of triangles (shared neighbors). Computing this bound relies on triangle enumeration, which typically costs $\mathcal{O}(\Delta^2)$ per node but is significantly faster than LP solvers in practice due to the sparsity of real-world graphs.
    
    \item \textbf{Neighborhood Overlap Approximation \cite{ni2019community}:} For maximum scalability, we approximate the transport cost using the Jaccard index of neighborhoods. The intersection of two sorted neighbor lists of size $d_u$ and $d_v$ takes $\mathcal{O}(d_u + d_v)$. Summing over all edges, the total complexity is proportional to the sum of squared degrees, bounded by:
    \begin{equation}
    \mathcal{T}_{approx} = \mathcal{O}(|\mathcal{E}|\cdot\Delta)
    \end{equation}
    
    In real-world sparse graphs where $\Delta \ll |\mathcal{V}|$, this approximation scales linearly with the number of edges, making it applicable to massive networks.
\end{itemize}

\subsection{Online Training Complexity}

During the training phase, GeoMoE decouples from the curvature calculation logic. The curvature values $\kappa(u,v)$ are stored as static edge attributes.

\begin{itemize}
    \item \textbf{Gating Network:} The curvature-guided gating alignment (Section 4.2) performs a simple lookup operation and an MLP projection. This incurs an $\mathcal{O}(1)$ cost per node/edge, negligible compared to feature aggregation.
    
    \item \textbf{Expert Computation:} The three experts (Euclidean, Hyperbolic, Spherical) operate in parallel. Assuming a standard message-passing scheme with hidden dimension $d$, the complexity is $\mathcal{O}(|\mathcal{E}|d + |\mathcal{V}|d^2)$. While Riemannian manifold operations (e.g., logarithmic maps) introduce a larger constant factor than Euclidean operations, they do not change the asymptotic complexity and are efficiently parallelized on GPUs. 
    
    \item \textbf{Overall Complexity:} The total training complexity per epoch is:
    \begin{equation}
        \mathcal{T}_{\text{train}} = \mathcal{O}(|\mathcal{E}| \cdot d + |\mathcal{V}| \cdot d^2).
    \end{equation}
\end{itemize}

Since $d$ is a small constant (e.g., 16 or 64), the training complexity is linear in the number of edges, matching standard GNN baselines such as GCN.

\subsection{Discussion on Approximation Validity}

One might question whether approximate curvature compromises the model's performance. Recent studies \cite{sun2023deepricci, feng2024graph} suggest that for graph learning tasks, the \textit{relative ranking} and \textit{sign} of the curvature (indicating tree-likeness vs. clique-likeness) are more critical than the exact numerical value usually. Since our gating mechanism (Equation 7) relies on thresholds and soft alignments, it is robust to the minor numerical perturbations introduced by approximation methods. Therefore, GeoMoE maintains its effectiveness even under scalable approximation regimes.

\section{Experiments}

\subsection{Experimental Settings}\label{Experimental Settings appendix}

\textbf{Datasets.} We evaluate our method on six benchmark datasets spanning diverse domains.
Cora, CiteSeer, and PubMed \cite{sen2008collective, namata2012query} are canonical citation networks, in which nodes correspond to scientific publications and citations are treated as undirected edges.
Photo \cite{shchur2018pitfalls} is derived from the Amazon co-purchase graph, comprising photography-related products as nodes, with edges reflecting frequent co-purchase patterns.
WikiCS \cite{mernyei2020wiki} constructs a hyperlink graph from Wikipedia, with nodes representing computer science articles and edges denoting internal hyperlinks; each article is annotated with one of ten subfield categories.
Airport \cite{chami2019hyperbolic} models the global air transportation system, where nodes are airports and edges reflect the existence of commercial airline routes between them. Detailed statistics for all datasets are provided in \cref{table:Dataset}

\textbf{Implementation Details.} In our experiments, we adopt the standard data splits for the three citation networks (Cora, CiteSeer, PubMed). 
For Photo and Airport, we use random splits of 60\%/20\%/20\% and 70\%/15\%/15\% for training, validation, and testing, respectively.
For WikiCS, we use the official fixed split provided by the dataset, specifically the first of its 20 predefined training/validation partitions along with the standard test set.
The grid search is performed over the following search space: Learning rate: [0.001, 0.005, 0.01, 0.02]; Dropout rate: [0.0, 0.1, 0.2, 0.3, 0.4, 0.5, 0.6]; weight decay: [0, 1e-4, 5e-4, 1e-3]; Number of hidden layers: [1, 2, 3]. To ensure statistical reliability, all results are reported as the average of 10 runs with random parameter initializations. We set the dimension of latent representation of all methods as 16. In our GeoMoE framework, we designate GCN and HGAT as the Euclidean and hyperbolic base experts, respectively. All experiments are implemented in Python using the PyTorch framework and conducted on an NVIDIA GeForce RTX 3090 GPU.

\begin{table}[t]
    \caption{Statistics of the experimental datasets.}
    \label{table:Dataset}
    \begin{tabular}{ccccc}
        \toprule
        Dataset & \#Nodes & \#Edges & \#Classes & \#Features \\
        \midrule
        Cora & 2,708 & 5,429 & 7 & 1,433 \\
        CiteSeer & 3,327 & 4,732 & 6 & 3,703 \\
        PubMed & 19,717 & 44,338 & 3 & 500 \\
        Photo & 7,650 & 119,081 & 8 & 745 \\
        WikiCS & 11,701 & 216,123 & 10 & 300 \\
        Airport & 3,188 & 18,631 & 4 & 4 \\       
        \bottomrule
    \end{tabular}
\end{table}

\begin{figure}[h!]
  \centering
  \includegraphics[width=0.95\columnwidth]{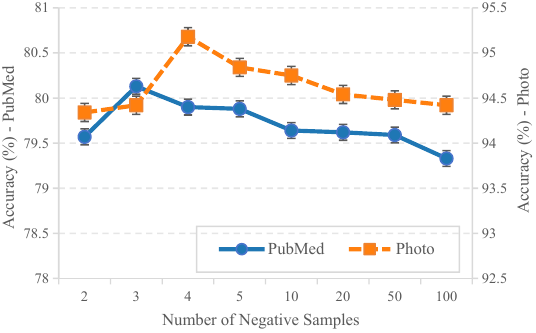}
  \caption{Impact of negative sample count $K$.}
  \label{fig:negative_samples}
\end{figure}

\subsection{Further Analysis}\label{app:Further Analysis}

\textbf{Impact of negative sample count.}
We analyze the impact of negative sample count $K$ on model performance across PubMed and Photo datasets. As shown in \cref{fig:negative_samples}, performance initially improves with increasing $K$, peaking at $K=3$ for PubMed and $K=4$ for Photo.
This demonstrates that our inter-node negative sampling strategy effectively enforces geometry-aware discrimination, with optimal regularization achieved using only a modest number of negatives.
However, we observe a marginal performance decay as the cardinality exceeds $10$, which can be attributed to the "false negative" phenomenon where topologically similar nodes are erroneously pushed apart in the embedding space. Overall, GeoMoE maintains robust performance within the range of $[3, 20]$, making it easy to tune in practice.

\end{document}